\definecolor{darkblue}{rgb}{0.0,0.0,0.3}
\definecolor{darkgreen}{rgb}{0.0,0.3,0.0}
\newcommand{\Comments}{0}
\newcommand{\mynote}[2]{\ifnum\Comments=1\textcolor{#1}{#2}\fi}
\newcommand{\mytodo}[2]{\ifnum\Comments=1	\todo[linecolor=#1!80!black,backgroundcolor=#1,bordercolor=#1!80!black]{#2}\fi}
\newcommand{\Ind}[1]{\mathbf{1}\{#1\}}
\newcommand{\toto}{\rightrightarrows}
\renewcommand{\i}{{(i)}}
\newcommand{\E}{\mathbb{E}}
\renewcommand{\H}{\mathcal{H}}
\renewcommand{\L}{\mathbf{L}}
\newcommand{\R}{\mathcal{R}}
\newcommand{\reals}{\mathbb{R}}
\newcommand{\Sc}{\mathcal{S}}
\newcommand{\T}{\mathcal{T}}
\newcommand{\X}{\mathcal{X}}
\newcommand{\Y}{\mathcal{Y}}
\newcommand{\simplex}{\Delta_\Y}
\renewcommand{\i}{{(i)}}
\newcommand{\prop}[1]{\mathrm{prop}[#1]}
\newcommand{\Gamext}{{\hat{\Gamma}}}
\newcommand{\Lreg}{L^{\R,\lambda}}
\newcommand{\ones}{\mathbb{1}}
\newcommand{\relint}{\mathrm{relint}}
\newcommand{\equivs}{\equiv_{\vec s}}
\newcommand{\ND}{\mathbf{nondiff}}
\newcommand{\risk}[1]{\underline{#1}}
\newcommand{\argmax}{\arg\,\max}
\newcommand{\argmin}{\arg\,\min}
\renewcommand{\vec}[1]{\mathbf{#1}}
\newtheorem{definition}{Definition}
\newtheorem{theorem}{Theorem}
\newtheorem{proposition}{Proposition}
\newtheorem{corollary}{Corollary}
\title{Using Property Elicitation to Understand the Impacts of Fairness Regularizers}
\author[1]{Jessie Finocchiaro} 
\affil[1]{Center for Research on Computation and Society, Harvard University, \texttt{jessie@seas.harvard.edu}}
\date{}
\begin{document}

\maketitle

\begin{abstract}
Predictive algorithms are often trained by optimizing some loss function, to which regularization functions are added to impose a penalty for violating constraints.
As expected, the addition of such regularization functions can change the minimizer of the objective.
It is not well-understood which regularizers change the minimizer of the loss, and, when the minimizer does change, \emph{how} it changes.
We use \emph{property elicitation} to take first steps towards understanding the joint relationship between the loss, regularization functions, and the optimal decision for a given problem instance. 
In particular, we give a necessary and sufficient condition on loss and regularizer pairs for when a property changes with the addition of the regularizer, and examine some commonly used regularizers satisfying this condition from the fair machine learning literature.
We empirically demonstrate how algorithmic decision-making changes as a function of both data distribution changes and hardness of the constraints.
\end{abstract}

\section{Introduction}
Machine learning is increasingly being used for prediction and resource allocation tasks pertaining to human livelihood; algorithms often make predictions based on patterns in historical data to make or supplement decisions about future events.
For example, algorithms are commonly used to determine a whether or not a loan applicant should receive a loan~\citep{sheikh2020approach,arutjothi2017prediction,singh2021prediction}, estimate a patient's risk of heart disease~\citep{heart,lloyd2010cardiovascular,goldstein2017moving}, and estimate need for public assistance~\citep{kube2023community}, among other settings.
Typically, an algorithm tries to predict something like the probability of an applicant repaying the loan if granted one, and then uses this prediction to assign a treatment to the applicant, such as granting or not granting a loan.
Implicit in this model is the use of an underlying distribution to assign a treatment by computing some underlying summary statistic, or \emph{property}, of the distribution over outcomes.
Property elicitation studies the relationship between the choice of objective function, treatment assignments, and various statistics.
For example, minimizing squared loss corresponds to predicting the \emph{expected value} of the outcome (the probability of repayment) and deciding whether or not to give a loan based on the expected value being above a given threshold.
This contrasts with minimizing the 0-1 loss, which corresponds to learning the \emph{mode}, of whether the person is more likely than not to repay a loan, and the assigned treatment is simply the decision to grant a loan.

In most practical optimization and allocation tasks, however, one faces constraints on the treatment space, especially when the treatments impact human livelihood and when resources are scarce.
In particular, fairness constraints are often employed to enforce the (approximately) equal algorithmic treatment of different predefined groups.
Instead of minimizing the original loss function, these algorithms often instead minimize \texttt{loss~+~weight~*~regularizer}, where the regularization term adds a penalty for violating certain desiderata about community-level outcomes.

However, to date, there is little understanding of how adding regularization functions into the optimization problem changes the property of the data distribution learned.
We give a necessary and sufficient condition for regularizers to preserve an elicited property: the property elicited by the fairness regularizer must be equivalent to the property elicited by the original loss.
However, this condition is rather strong: equivalence holds \emph{regardless of the underlying data distribution}.
Therefore, we further characterize for which data distributions the optimal treatments differ or are the same.
We demonstrate our results on group fairness regularizers, though other regularization functions can be used as well (e.g.,~\citep{mireshghallah2021privacy}).

To this end, we introduce the notion of \emph{regularized property elicitation}, and what it means for two properties to be equivalent.
In Theorem~\ref{thm:equivalence-iff} we show that, under mild conditions on the regularizer, a regularized property is equivalent to the original property if and only if the property elicited by the regularizer is equivalent to the original property.
We apply Theorem~\ref{thm:equivalence-iff} to a handful of popular fairness regularizers-- the absolute difference of demographic parity, expected equality of opportunity, and equalized false positive rates-- and demonstrate they are not equivalent to cost-sensitive classifications.
However, it is not necessarily the case that a regularizer changes the elicited property: many additive regularizers yield regularized properties equivalent to the original, namely (multi)calibration and bounded group loss.\footnote{We are not assigning a value judgment to whether or not a regularizer changes a property.}
In these cases, while the property does not change, using the regularizer is still effective because of practical limitations on the experessivity of the hypothesis class $\H$, among other optimization challenges.
It does suggest in some sense that these equivalent regularizers value ``accuracy as fairness,'' in line with sentiment from the original works.

In \S~\ref{sec:regularized-properties}, we present Theorem~\ref{thm:equivalence-iff}, which gives the necessary and sufficient condition for the equivalence of properties, and in \S~\ref{sec:nonequivalence} demonstrate these conditions on common fairness regularizers for binary classification.
For those regularizers that do change an elicited property, we additionally provide examples and geometric intuition about \emph{for which data distributions} the regularizers change (or do not change) the optimal decision, enforcing the imposed constraints.
Finally, in \S~\ref{sec:experiments}, we demonstrate our results with empirical evaluation on synthetic data, a heart attack risk analysis dataset~\citep{heart}, and the German lending dataset~\citep{kamiran2009classifying}.

\subsection{Literature review}

In machine learning, a variety of pre-, in-, and post-processing techniques have emerged in recent years to make algorithmic decision-making more fair or equitable.
We focus on one algorithmic aspect of in-processing wherein one modifies the learning algorithm itself by adding a soft constraint to the objective function, which is some weighted metric of the fairness violation.
The addition of fairness regularizers is one common approach to try to improve algorithmic decision-making in practice, though their effects are generally not well-understood (cf.~\citep{kamishima2012fairness,williamson2019fairness,denis2021fairness,bechavod2017penalizing,goel2018non,berk2017convex,do2022fair,huang2019stable,jung2020fair}).
While many proposed fairness metrics are situated in binary classification settings, extensions beyond the binary setting have been studied more recently~\citep{williamson2019fairness,denis2021fairness,konstantinov2021fairness,donini2018empirical,zafar2017fairness}.
Our framework is general enough to handle a variety of prediction tasks and regularizers beyond the fair machine learning literature.

We study the impact of regularization functions on the ``right'' decision an algorithm should make as a function of the underlying data distribution through the lens of property elicitation.
Property elicitation is well understood on an individual basis for a variety of discrete prediction tasks~\citep{lambert2008eliciting,lambert2009eliciting,finocchiaro2022embedding,lambert2018elicitation} and continuous estimation problems~\citep{steinwart2014elicitation,savage1971elicitation,brier1950verification,frongillo2014general,fissler2017higher} on an individual level.
Recently, \citet{jung2021moment} and \citet{noarov2023statistical} relate property elicitation to the notion of multicalibration.
These works extend the canonical understanding of multicalibration to estimate values beyond the mean, and provide (multicalibrated) algorithms to estimate higher moments, showing a strong equivalence between calibration and property elicitation.
These results align with some of the intuition provided in Theorem~\ref{thm:equivalence-iff}, but our result goes beyond the scope of calibration as a fairness concept.
Regularizers considering community-level outcomes and group membership requires we extend traditional notions of property elicitation.


\section{Background}\label{sec:background}
We are primarily concerned with evaluating the optimal treatment for various prediction tasks.
Consider an agent $i \in \{1, 2, \ldots, m\} = [m]$ who will achieve some outcome $y^\i \in \Y$ with probability $p^\i \in \simplex$, where $\simplex$ is the simplex over a finite set of outcomes $\Y$.
A central decision-maker (often a principal or algorithm) assigns a treatment $t^\i \in \T$ to the agent, and their error is scored according to a loss function $L: \T \times \Y \to \reals_+$.
As shorthand, denote $L(t^\i; p^\i) := \E_{Y \sim p^\i} L(t^\i, Y)$ as the expected loss over $p^\i$.
Moreover, we assume each agent $i$ is a member of a group $s^\i \in \Sc$, and want to ensure agents of different groups are treated fairly by the centralized decision-maker. 
Let $n_g := |\{ i \in [m] : s^\i = g\}|$ be the number of agents belonging to group $g$, which we assume is positive for each $g \in \Sc$.
Often, we are concerned with possibly set-valued functions, $\Gamma : \simplex \to 2^{\T} \setminus \{\emptyset\}$; for shorthand, we denote this $\Gamma : \simplex \toto \T$.

In supervised machine learning, predictions are made by learning a hypothesis function $h:\X \to \T$ mapping features $x \in \X$ to treatments $t \in \T$.
We assume $\T$ is a finite set unless otherwise stated.
If the class of hypotheses $\H$ is sufficiently expressive, then $t$ encapsulates how the optimal hypothesis \emph{should assign treatment}, given an input $x$.
Equivalently, we are concerned with optimal decisions under $p^\i = \Pr[Y \mid X = x^\i]$.
For simplicity, we abstract away $\X$ and proceed with $p^\i \in \simplex$ and $t^\i \in \T$ in the sequel.

\subsection{Regularization functions}
Often, ``fair'' algorithms constrain optimization to ensure certain desiderata are satisfied.
However, some standard optimization algorithms such as stochastic gradient descent often soften these constraints, adding an additional penalty to the loss function for violating the constraints.
We study how the addition of regularization functions $\R : \T^m \times \Sc^m \times \simplex^m \to \reals_+$  (henceforth: regularizers) change the optimal treatment assigned by minimizing the expected loss.

For example, imposing group fairness constraints, one might aim to ensure treatments are independent of the sensitive statistic (as in demographic parity) or treatments are calibrated to line up with the true probabilities of positive classification (as in multicalibration).
In this setting, given a collection of individuals $\{(s^\i, p^\i)\}$, we aim to optimize
\begin{align}\label{eq:reg-loss}
    \min_{\vec t \in \T^m} \Lreg(\vec t; \vec s; \vec p) &:= (1-\lambda) \underbrace{\left[ \frac 1 m \sum_{i=1}^m L(t^\i; p^\i) \right]}_{\text{expected loss over $m$ agents}} + \lambda \R(\vec t; \vec s; \vec p)~.
\end{align}

Because the regularizer might not be additive in $\vec t$, the treatment of an individual is not necessarily independent of the treatment of others.
This necessitates the optimization of $\vec t \in \T^m$ rather than considering each data point individually, as is standard in unregularized property elicitation.


\subsection{Property elicitation}
When making predictions, a decision-maker often aims to learn a \emph{property} $\Gamma : \simplex \toto \T$, which is simply a function mapping probability distributions to treatments.
Examples of commonly sought properties include the expected value $EV(p) = \{\E_{Y \sim p} [Y]\}$,  the mode $\mathrm{mode}(p) = \argmax_y p_y$, $\alpha$-quantiles, and rankings.

\begin{definition}[Property, elicits]\label{def:property-individual}
A \emph{property} is a function $\Gamma : \simplex \toto \T$ mapping probability distributions to reports.
If $|\T|$ is finite, we call $\Gamma$ a \emph{finite property}.
Moreover, a minimizable\footnote{One that attains the infimum in its first argument for all $y \in \Y$} loss $L : \T \times \Y \to \reals_+$ \emph{elicits} a property $\Gamma$ if, for all $p \in \simplex$,
\begin{align*}
    \Gamma(p) &= \argmin_{t \in \T} L(t; p)~.
\end{align*}
Conversely, we denote the \emph{level set} of a property $\Gamma_t = \{p \in \simplex \mid t \in \Gamma(p)\}$ as the set of distributions yielding the same optimal treatment.
\end{definition}
Throughout, we assume that properties are \emph{nonredundant}, meaning that the level set $\Gamma_t$ is full-dimensional\footnote{The affine dimension of the set equals the affine dimension of the simplex} for all $\vec t \in \T$ and for each $p \in \relint(\Gamext_{\vec t})$, we have $|\Gamext(\vec p)| = 1$.
This precludes the consideration of treatments that are rarely optimal, or only optimal if and only if another treatment is optimal as well.

Every minimizable loss elicits some property; we denote $\prop{L}$ as the (unique) property elicited by the loss $L$.
For example, the squared loss elicits the expected value~\citep{brier1950verification,savage1971elicitation}, and the level set $\Gamma_0 = \{p \in \simplex : \E_p [Y] = \{0\}\}$ of the expected value is the set of distributions with zero mean.
We will later study the geometry of the level sets of various properties to characterize the how the minimizers of unregularized losses differ from those of their regularized counterparts.
In order to do so, we consider the property $\Gamma$ evaluated on a population.
Given $\vec p \in \simplex^m$, we consider the extension $\Gamext(\vec p) := [\Gamma(p^\i)]_i$ with level sets $\Gamext_{\vec t} := \bigcap_i \{\vec p \in \simplex^m \mid t^\i \in \Gamma(p^\i)\}$.

We now extend Definition~\ref{def:property-individual} to include population-level reports for loss functions to encapsulate the case where the regularizer is not additive in $\vec t$ and/or is dependent on $\vec s$.

\begin{definition}[Regularized property elicitation]\label{def:prop-fair}
A \emph{regularized property} is a function $\Theta^{\R,\lambda} : \Sc^m \times \simplex^m \toto \T^m$ mapping beliefs over outcomes to population-level treatments.
Similarly, an objective function $L$ regularized by $\R$ (weighted by $\lambda$), denoted $\Lreg$, elicits a regularized property if, for all $\vec s \in \Sc^m$ and $\vec p \in \simplex^m$,
\begin{align*}
    \Theta^{\R,\lambda}(\vec s; \vec p) &= \argmin_{\vec t \in \T^m} \Lreg(\vec t; \vec s; \vec p).
\end{align*}
We let $\prop{\Lreg}$ denote the regularized property elicited by $\Lreg$.
\end{definition}
Denoting the level set of a regularized property requires some nuance because we are concerned with the change in optimal treatments as a function outcome distributions $\vec p$, but the regularized property is a function of $\vec s$ as well as $\vec p$.
Therefore, we denote the level set $\Theta^{\R,\lambda}_{\vec t; \vec s} = \{ \vec p \in \simplex^m \mid \vec t \in \Theta^{\R,\lambda}(\vec s; \vec p)\}$ denote the level set of the regularized property $\Theta^{\R,\lambda}$.
If $\vec s$ is clear from context, we sometimes omit it and write $\Theta^{\R,\lambda}_{\vec t}$.
We now define a trivial, constant regularizer $\R$ as non-enforcing, since it never enforces any constraints.

\begin{definition}
    A regularizer $\R$ is \emph{nonenforcing} if  $\prop{\R}_{\vec t} = \simplex^m$ for all $\vec t \in \T$, and \emph{enforcing} otherwise.
\end{definition}

\section{Equivalence of (regularized) properties}\label{sec:regularized-properties}

With an understanding of regularized property elicitation, we are now equipped to ask when a property ``changes'' with the addition of a regularizer to a loss; this requires us to consider what it means for properties to be unchanged, or equivalent.
\begin{definition}[Equivalence of properties]\label{def:property-equivalence}
A property $\Gamma : \simplex \toto \T$ \emph{is equivalent to} a regularized property $\Theta : \Sc^m \times \simplex^m \toto \T^m$ on $\vec s$ (denoted $\Gamma \equiv_{\vec s} \Theta$ or $\Gamext \equiv_{\vec s} \Theta$) if, for all $\vec p \in \simplex^m$, we have $\vec t \in \Gamma(\vec p) \iff \vec t \in \Theta(\vec s; \vec p)$.
\end{definition}

In general, but particularly for large sets of agents, equivalence of a regularized property to its unregularized counterpart is a rather strong condition: when there is a ``universally fair'' report, equivalence holds if (and only if) the regularizer elicits essentially the same property as the original loss.

The proof relies on the relationship between subgradients of the Bayes risk and property values~\citep[Theorem 4.5]{frongillo2019general}: if $L$ elicits $\Gamma$, then there is a choice of subgradients $D$ of the Bayes risk of $L$, $\risk L(\vec p) := \inf_{\vec t \in \T^m} L(\vec t; \vec p)$ such that there is a bijection from property values to $D$.
Therefore, points of nondifferentiability of $\risk L$ form the intersection of level sets.

\begin{theorem}\label{thm:equivalence-iff}
    Fix $\lambda \in (0,1)$ and $\vec s \in \Sc^m$.
    Let loss $L$ elicit $\Gamma$, $L^{\R,\lambda}$ elicit $\Theta$, and $\R$ elicit $H$.
    Then (1) $\Gamext \equivs H \implies \Gamext \equivs \Theta$.
    (2) If $H$ is nonredundant, then additionally assume $H_t \cap \Gamext_t \cap \Theta_t \neq \emptyset$ for all $t \in \T$.
    If $\Gamext \equivs \Theta$, then $\R$ is nonenforcing or $\Gamext \equivs H$.
\end{theorem}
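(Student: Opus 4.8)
The plan is to reduce everything to the geometry of concave Bayes risks, viewed as lower envelopes of affine pieces indexed by the population treatment $\vec t \in \T^m$. Write $A(\vec t;\vec p) := \frac1m\sum_i L(t^\i;p^\i)$ for the population loss, $B(\vec t;\vec p) := \R(\vec t;\vec s;\vec p)$, and $F(\vec t;\vec p) := \Lreg(\vec t;\vec s;\vec p) = (1-\lambda)A(\vec t;\vec p) + \lambda B(\vec t;\vec p)$. Each is affine in $\vec p$ for fixed $\vec t$ and $\vec s$, so the associated Bayes risks $\risk A$, $\risk{\R}$, and $\risk F$ (the latter two at fixed $\vec s$) are concave and equal to the infimum over $\vec t$ of these affine pieces. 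The first routine observation is that the additive population loss $A$ elicits exactly $\Gamext$, since its argmin factors across coordinates; thus $\Gamext$, $H$, and $\Theta$ are the properties read off from the three envelopes, and by the subgradient characterization recalled before the theorem each nonredundant envelope is in bijection with its subgradients, with level-set boundaries occurring exactly at the nondifferentiabilities.

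For part (1), assume $\Gamext \equivs H$. If $\vec t \in \Gamext(\vec p)$ then $\vec t$ minimizes $A(\cdot;\vec p)$, and by equivalence $\vec t \in H(\vec s;\vec p)$, so $\vec t$ also minimizes $B(\cdot;\vec p)$; since both terms of $F$ are simultaneously minimized and $\lambda \in (0,1)$, $\vec t$ minimizes $F$, giving $\vec t \in \Theta(\vec s;\vec p)$. Conversely, equivalence furnishes a common minimizer of $A$ and $B$, so $\risk F(\vec p) = (1-\lambda)\risk A(\vec p) + \lambda \risk{\R}(\vec s;\vec p)$; any $\vec t \in \Theta(\vec s;\vec p)$ attains this value, and since $A(\vec t;\vec p) \ge \risk A(\vec p)$ and $B(\vec t;\vec p) \ge \risk{\R}(\vec s;\vec p)$ with strictly positive weights, both inequalities are tight, so $\vec t$ minimizes $A$ and hence $\vec t \in \Gamext(\vec p)$. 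This establishes $\Gamext \equivs \Theta$.

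For part (2), assume $\Gamext \equivs \Theta$, so the envelopes $\risk A$ and $\risk F$ induce the same partition of $\simplex^m$ into full-dimensional level sets with matching labels. On the common cell $\relint(\Gamext_{\vec t}) = \relint(\Theta_{\vec t})$ we have $\risk A = A(\vec t;\cdot)$ and $\risk F = F(\vec t;\cdot)$, whence $\lambda B(\vec t;\cdot) = \risk F - (1-\lambda)\risk A$ there; in particular, across any facet shared by cells $\vec t$ and $\vec t'$, the equalities $A(\vec t;\cdot) = A(\vec t';\cdot)$ and $F(\vec t;\cdot) = F(\vec t';\cdot)$ force $B(\vec t;\cdot) = B(\vec t';\cdot)$ on that facet. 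Thus the pieces $\{B(\vec t;\cdot)\}$ glue along the $\Gamext$-partition into a single continuous piecewise-affine $\tilde B := \frac1\lambda(\risk F - (1-\lambda)\risk A)$. The goal is to show $\tilde B$ is exactly the envelope $\risk{\R}$ --- equivalently that on each cell $\Gamext_{\vec t}$ the active regularizer piece is $B(\vec t;\cdot)$, yielding $\Gamext_{\vec t} \subseteq H_{\vec t}$ and hence $\Gamext \equivs H$ --- unless all pieces $B(\vec t;\cdot)$ coincide, which is precisely the nonenforcing case.

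The main obstacle is this last step: minimizing $F$ at $\vec p$ does not by itself force $\vec t$ to minimize $B$ at $\vec p$ (a strong preference of $A$ can mask a mild disagreement of $\R$), so the cells of $\risk{\R}$ could a priori carry a permuted labeling relative to $\Gamext$ even while $\tilde B$ and $\risk{\R}$ share the same facet skeleton; indeed a one-dimensional example shows that without further hypotheses the regularizer envelope can ``bend the wrong way'' and reverse the assignment. This is exactly where nonredundancy of $H$ and the hypothesis $H_{\vec t} \cap \Gamext_{\vec t} \cap \Theta_{\vec t} \neq \emptyset$ enter: read as a relative-interior (full-dimensional) overlap, it anchors, for each $\vec t$, a distribution at which $\vec t$ is simultaneously the unique $\Gamma$-, $\Theta$-, and $\R$-optimal treatment, pinning that cell's label; concavity of $\risk{\R}$ together with the treatment-to-subgradient bijection then propagates the correct labeling across adjacent facets, excluding the orientation-reversed configuration and forcing $\tilde B = \risk{\R}$. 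I expect most of the effort to go into verifying that this anchoring-plus-propagation argument genuinely rules out the permuted assignment, since a naive facet-by-facet sign analysis leaves that case open.
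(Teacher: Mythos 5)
Your part (1) is correct and is essentially the paper's argument; your converse direction, via the identity $\risk{\Lreg}(\vec p) = (1-\lambda)\risk{L}(\vec p) + \lambda\risk{\R}(\vec s;\vec p)$ and tightness of both terms, is if anything a cleaner phrasing than the paper's case analysis. Your setup for part (2) — reading $\Gamext$, $H$, $\Theta$ off three concave piecewise-affine envelopes and reducing the question to whether the labeled cell decompositions of $\risk{L}$ and $\risk{\R}$ agree — is also the paper's strategy.

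The gap is exactly the step you flag yourself and then defer. Having defined $\tilde B = \frac{1}{\lambda}\left(\risk{\Lreg} - (1-\lambda)\risk{L}\right)$ and shown its pieces glue continuously along the $\Gamext$-partition, you still need $\tilde B = \risk{\R}$, i.e., that on each cell $\Gamext_{\vec t}$ the affine piece $B(\vec t;\cdot)$ is actually the \emph{minimal} regularizer piece there. You correctly observe that the facet identity $B(\vec t;\cdot) = B(\vec t';\cdot)$ carries no sign information and that a permuted or orientation-reversed labeling is not excluded by it, but your proposed fix (``anchoring plus propagation'' via concavity and the subgradient bijection) is stated as an expectation, not carried out; that verification \emph{is} the theorem, so the proof does not close. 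The missing mechanism, which is what the paper supplies, is a perturbation argument at the facets: take $\vec p'$ with $\vec p' \in \Gamext_{\vec t}\cap\Gamext_{\vec t'}$ (so the $L$-pieces for $\vec t$ and $\vec t'$ agree at $\vec p'$) and $\vec p \in B(\vec p',\epsilon)$ in the interior of $\Gamext_{\vec t'} = \Theta_{\vec t'}$; strict optimality of $\vec t'$ for $\Lreg$ at $\vec p$ plus affineness of the $L$-pieces across the facet forces, as $\epsilon \to 0$, $\R(\vec t';\vec s;\vec p)\le\R(\vec t;\vec s;\vec p)$ near the facet. This is the directional information your sketch lacks: it shows the cell $\Gamext_{\vec t'}$ prefers $\vec t'$ under $\R$ over each of its neighbors, rules out $\ND(\risk{\R})\subsetneq\ND(\risk{L})$ for enforcing $\R$ (no $H_{\vec t}$ can be empty when $H$ is assumed nonredundant), and only \emph{then} does the hypothesis $H_{\vec t}\cap\Gamext_{\vec t}\cap\Theta_{\vec t}\neq\emptyset$ do its purely combinatorial job of matching labels between two partitions already known to share the same cells. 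As written, your proposal establishes the coincidence of the facet skeletons but leaves the labeling — the actual content of part (2) — as a conjecture.
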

\begin{proof}
(1)
The first statement is immediate as $H \equivs \Gamext$ implies 
    \begin{align*}
        \vec t \in \argmin_{\vec t'} \R(\vec t';\vec s; \vec p) &\iff \vec t \in \argmin_{\vec t'} L(\vec t'; \vec p) \\
        \iff \vec t \in \argmin_{\vec t'} \lambda \R(\vec t';\vec s; \vec p) &\iff \vec t \in \argmin_{\vec t'} (1 - \lambda) L(\vec t';\vec p)\\
        \implies \vec t &\in \argmin_{\vec t'} \lambda \R(\vec t'; \vec s; \vec p) + (1-\lambda) L(\vec t'; \vec p)
    \end{align*}
    Now $\vec t \in \Gamext(\vec p) \implies \vec t \in \Theta(\vec p)$.
    If $\vec t \in \Theta(\vec p)$, then consider two cases: if $\vec t \in H(\vec p)$, we are done by assumption.
    If $\vec t \not \in H(\vec p)$, then $\vec t \not \in \Gamext(\vec p)$. However, the two are equivalent, so there is some $\vec t' \in H(\vec p) \cap \Gamext(\vec p)$, so we contradict $\vec t \in \Theta(\vec p)$.

(2)
Observe that since $\T$ is finite, so is $\T^m$, and the function $\risk L : \vec p \mapsto \inf_{\vec t \in \T^m} L(\vec t; \vec p)$ is piecewise linear and concave, as it is the pointwise infimum of a finite set of affine functions (since expectation is linear).
Moreover, the function $\eta_t^L : \vec p \mapsto L(\vec t; \vec p)$ is affine and supports $\risk L$ on $\Gamext_t$ for every $\vec t \in \T^m$.
Observe that if $\Theta \equivs \Gamext$, then $\eta_t^{L}$ and $\eta_t^{\Lreg}$ support $\risk L$ and $\risk{\Lreg}$ respectively on the same sets for all $\vec t \in \T^m$.

Consider $\ND(f: \simplex^m \to \reals_+) := \{\vec p \in \simplex^m \mid f \text{ is not differentiable at } \vec p\}$.
Since $\lambda \in (0,1)$, then $\ND({\risk{L^\R}}) = \ND(\risk L) \cup \ND(\risk \R)$
\footnote{This is true regardless of $\lambda \in (0,1)$; see \cite[Lemma 5]{finocchiaro2022embedding}}.
The assumption $\Gamma \equivs \Theta$ implies that $\ND(\risk L) = \ND(\risk {L^\R})$, which in turn implies $\ND(\risk \R) \subseteq \ND(\risk L)$.
If $\R$ is enforcing and $\ND(\risk \R) \subsetneq \ND(\risk L)$, there must be some $\vec t' \in \T^m$ such that $H_{\vec t'} = \emptyset$, and therefore, $H$ is redundant.

Consider $\vec p' \in \ND(\risk L) \setminus \ND(\risk \R)$.
Observe that $\vec p' \in \Gamext_{\vec t} \cap \Gamext_{\vec t'}$ for some $\vec t \neq \vec t'$.
There exists a $\vec p \in B(\vec p', \epsilon)$ for small $\|\epsilon\| > 0$ such that $\eta^{\Lreg}_{t}$ supports $\risk{\Lreg}$ on $\mathbf{conv}(\{p, p'\})$.  
\begin{align*}
    &\vec p \in \Gamext_{\vec t'} \setminus \Gamext_{\vec t} \iff \vec p \in \Theta_{\vec t'} \setminus \Theta_{\vec t} \\
    &\implies (1-\lambda) L(\vec t'; \vec p) + \lambda \R(\vec t'; \vec p) < (1-\lambda) L(\vec t;\vec p) + \lambda \R(\vec t; \vec p) \\
    &\iff (1-\lambda) (L(\vec t'; \vec p') + c^T \epsilon) + \lambda \R(\vec t'; \vec p) < (1-\lambda) (L(\vec t;\vec p') + d^T \epsilon) + \lambda \R(\vec t; \vec p) \qquad \text{$\risk L$ affine on $\mathbf{conv}(\{p,p'\})$} \\
    &\implies \lambda \R(\vec t' ; \vec p) \leq \lambda \R(\vec t; \vec p)\qquad \text{$\epsilon \to \vec 0$}~,
\end{align*}
which implies $\vec p \in H_{t'}$, and therefore, $H_{t'} \neq \emptyset$, yielding a contradiction. 

Therefore, we must either have $\R$ nonenforcing or $\ND(\risk \R) = \ND(\risk L)$, the latter of which implies that $H$ is nonredundant.
We avoid permutations of level sets by the assumption that $H_t \cap \Gamma_t \cap \Theta_t$ is nonempty, and must have equivalence of the properties.
\end{proof}

Intuitively, Theorem~\ref{thm:equivalence-iff} says that the property elicited by a regularized loss function is the same as the unregularized loss if and only if the regularizer elicits the same property as the loss itself.
Since loss functions are measurements of accuracy, then equivalence of properties implies an algorithm values accuracy as fairness.

\section{(Non)equivalence of common fairness metrics for binary classification}\label{sec:nonequivalence}

We now evaluate a handful of common fairness regularizers, and apply Theorem~\ref{thm:equivalence-iff} to show nonequivalence between binary classification tasks and their regularized counterparts.
For each regularizer, we give restrictions on $\simplex^m$ such that the regularized property is equivalent to the original under these restrictions.

To build intuition, we examine simple cases of how regularizers change elicited properties with populations of $m=2$ agents belonging to different groups $\vec s = (a,b)$.

Figure~\ref{fig:dp_example} provides some additional intuition for the proof of Theorem~\ref{thm:equivalence-iff}.
Each subfigure gives the level sets of the property elicited by the mode regularized by the demographic parity violation (\ref{eq:01-rep-dp}), where each point in $[0,1]^2$ represents $\vec p \in \simplex^2$ by $(\Pr_{p^{(1)}}[Y=1], \Pr_{p^{(2)}}[Y=1])$.
Each colored cell depicts a different level set of a regularized property $\Theta^{DP,\lambda}$.
This regularized property is overlaid on the (unregularized) mode, so that, upon visual inspection, one observes the regions where the two properties differ.
As $\lambda \to 0$, the regularized property becomes increasingly similar to the unregularized, and as $\lambda \to 1$, the regularized property increasingly resembles the property elicited by $\R$.

\subsection{Demographic parity}\label{subsec:dp}
In the context of binary classification, one might be interested in regularizing their loss with the demographic parity violation, measured by the absolute difference of the rates at which agents are assigned the positive treatment from each of two groups.
Any treatment that assigns the positive treatment at the same rate optimizes the demographic parity regularizer, which is not equivalent to the mode.
That is, $H(\vec s; \vec p) \supseteq \{\vec 0, \ones\}$ for all $\vec p \in \simplex^m$ and $\vec s \in \Sc^m$.
Thus, if $\Sc = \{a,b\}$\footnote{This is simply for ease of exposition, and can be relaxed.}, we can apply Theorem~\ref{thm:equivalence-iff} to conclude the DP-regularized mode is not equivalent to the unregularized mode.

\begin{align*}
    L^{DP,\lambda}(\vec t; \vec s; \vec p) &= \frac {1-\lambda} m  \sum_{i = 1}^m L(t^\i; p^\i) + \lambda \left|\frac{1}{n_a} \sum_{i : s^\i = a} t^\i - \frac{1}{n_b} \sum_{i : s^\i = b} t^\i\right| \tag{DP} \label{eq:01-rep-dp}
\end{align*}

Now, with $\T = \{0,1\}$, if $L$ is the 0-1 loss\footnote{These derivations also hold if $L$ is squared loss, hinge loss, and many other losses for binary classification.}, 
we can evaluate $L^{DP,\lambda}$ for each treatment in $\T^2 = \{(1,1), (0,1), (1,0), (0,0)\}$.

\begin{align*}
    L^{DP,\lambda}((1,1); (p^{(1)}, p^{(2)})) &= \frac {1-\lambda} 2 \left[(1 - p^{(1)}) + (1 - p^{(2)}) \right] \\
    L^{DP,\lambda}((0,1); (p^{(1)}, p^{(2)})) &= \frac {1-\lambda} 2 \left[p^{(1)} + (1 - p^{(2)}) \right] + \lambda \\
    L^{DP,\lambda}((1,0); (p^{(1)}, p^{(2)})) &= \frac {1-\lambda} 2 \left[(1 - p^{(1)}) + p^{(2)} \right] + \lambda \\
    L^{DP,\lambda}((0,0); (p^{(1)}, p^{(2)})) &= \frac {1-\lambda} 2 \left[p^{(1)} + p^{(2)}\right].
\end{align*}
 These expected losses now enable us to study the level sets $\Theta^{DP,\lambda}_{\vec t; \vec s} = \{ \vec p \in \simplex ^m \mid  \vec t \in \Theta^{DP, \lambda}(\vec s; \vec p)\}$.

We have have $(0,0) \in \argmin_{\vec t \in \T^2}$ if 
\begin{align*}
    \frac {1-\lambda} 2 \left[(1 - p^{(1)}) + (1 - p^{(2)})\right] &\leq \frac {1-\lambda} 2 \left[p^{(1)} + (1 - p^{(2)})\right] + \lambda 
    \iff \frac{1 - 3\lambda} {2(1-\lambda)} \leq p^{(1)}\\
    \frac {1-\lambda} 2 \left[(1 - p^{(1)}) + (1 - p^{(2)})\right] &\leq \frac {1-\lambda} 2 \left[p^{(2)} + (1 - p^{(1)})\right] + \lambda 
    \iff \frac{1 - 3\lambda}{2(1-\lambda)} \leq p^{(2)}\\
    \frac {1-\lambda} 2 \left[(1 - p^{(1)}) + (1 - p^{(2)})\right] &\leq \frac {1-\lambda} 2 \left[p^{(1)} + p^{(2)}\right] 
    \iff p^{(1)} + p^{(2)} \leq 1~.
\end{align*}
Therefore, the level set $\Theta^{DP,\lambda}_{(0,0)}$ can be described by the polyhedron \begin{align*}\Theta^{DP,\lambda}_{(0,0)} &= \left\{p \in [0,1]^2 \mid \begin{bmatrix}0 & -1 & \frac {1-3\lambda}{2(1-\lambda)}\\-1 & 0 & \frac {1-3\lambda}{2(1-\lambda)}\\1& -1& 1 \end{bmatrix} \begin{bmatrix} p^{(1)}\\p^{(2)}\\ 1\end{bmatrix} \geq \vec 0\right\}~.
\end{align*}
Observe that the final constraint is actually one on the marginal $P[Y]$: the expected outcome over the whole population should be less likely to be $1$ than $0$.
We can evaluate the rest of the level sets in a similar manner.


Now let us gain some geometric intuition for how these level sets change by referencing Figure~\ref{fig:dp_example}.
For two agents belonging to different groups, each point in the figure represents a pair $\vec p := (p^{(1)}, p^{(2)})$ of true probabilities for the two agents.
The pair $\vec p \in [0,1]^2$, and the region $[0,1]^2$ can be divided into up to $|\T^m|$ regions for which each $\vec t \in \T^m$ is contained in $\Theta^{DP,\lambda}(\vec p)$.
The sequence of figures in Figure~\ref{fig:dp_example} denotes the level sets of $\Theta^{DP,\lambda}$ as one varies $\lambda \in [0,1]$.
For intuition, one can observe that the regions where the players receive the same treatment (blue and red) grows as $\lambda$ increases, starting with $1/2$ of the $[0,1]^2$ space, and increasing to all of $[0,1]^2$ as $\lambda \to 1$.

\begin{figure*}
\centering
\includegraphics[width=0.7\linewidth]{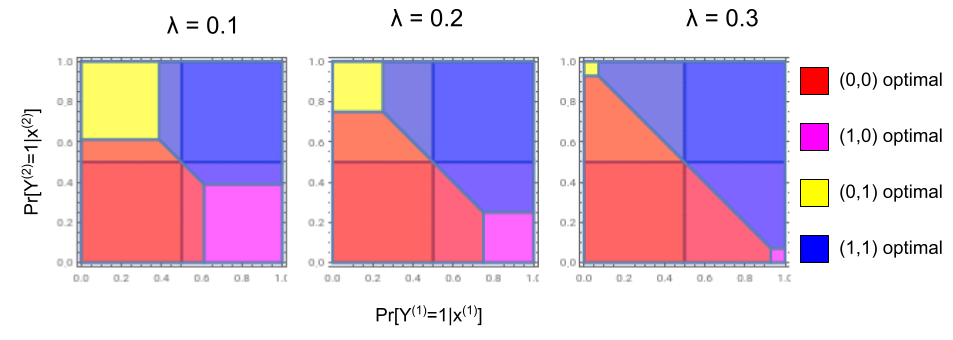}
    \caption{Visualizing the level sets of the $DP$-regularized property $\Theta^{DP,\lambda}$ for different values of $\lambda \in [0,1]$, where $m = 2$ and $\vec s = (a,b)$. Each point $(p^{(1)}, p^{(2)})$ in a square represents $(\Pr_{p^{(1)}}[Y=1], \Pr_{p^{(2)}}[Y=1])$, and each colored cell represents sets of $(p^{(1)}, p^{(2)})$ pairs such that the optimal treatment is the same for all points in the cell. For example, the magenta cell (lower right) is the set of distributions where the decision-maker prefers to attribute the positive treatment ($t^\i = 1$) to the first agent, and the negative treatment ($t^\i = 0$) to the second agent.}
    \label{fig:dp_example}
\end{figure*}

We now turn our attention towards the regions of $\simplex^m$ where the regularized and unregularized properties are equivalent with a demographic parity regularizer.
First, we observe that if uniform treatment of a population is optimal on the unregularized property, it is also optimal on the regularized property.

\begin{proposition}\label{thm:subset-intersection}
    Fix $\lambda \in (0,1)$.
    Let $L$ elicit $\Gamma$, $\Lreg$ elicit $\Theta$, and $\R$ elicit $H$.
    For all $\vec t \in \T^m$ and $\vec s \in \Sc^m$, $\Gamext_{\vec t} \cap H_{\vec t; \vec s} \subseteq \Theta_{\vec t}$.
\end{proposition}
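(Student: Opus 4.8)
The plan is to unwind the three level-set definitions and then invoke the elementary fact that a common minimizer of two nonnegatively weighted objectives also minimizes their sum. Fix $\vec t \in \T^m$ and $\vec s \in \Sc^m$, and take an arbitrary $\vec p \in \Gamext_{\vec t} \cap H_{\vec t; \vec s}$; the goal is to show $\vec p \in \Theta_{\vec t}$, i.e., that $\vec t \in \argmin_{\vec t' \in \T^m} \Lreg(\vec t'; \vec s; \vec p)$.

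First I would record what membership in each level set supplies. By definition, $\vec p \in H_{\vec t; \vec s}$ means $\vec t \in \argmin_{\vec t' \in \T^m} \R(\vec t'; \vec s; \vec p)$. The loss side requires one short observation: $\vec p \in \Gamext_{\vec t}$ means $t^\i \in \Gamma(p^\i) = \argmin_{t \in \T} L(t; p^\i)$ for every $i$, and because the term $F(\vec u) := \frac 1 m \sum_i L(u^\i; p^\i)$ is separable across agents, minimizing it over $\T^m$ decomposes into independent per-agent minimizations. Hence $\vec t \in \argmin_{\vec t' \in \T^m} F(\vec t')$.

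Next I would combine the two facts. Writing $\Lreg(\vec t'; \vec s; \vec p) = (1-\lambda) F(\vec t') + \lambda \R(\vec t'; \vec s; \vec p)$ and using $\lambda \in (0,1)$ so that both coefficients are nonnegative, the minimality of $\vec t$ for each of $F$ and $\R(\cdot; \vec s; \vec p)$ gives, for every $\vec t' \in \T^m$,
\begin{align*}
    (1-\lambda) F(\vec t) \leq (1-\lambda) F(\vec t') \quad\text{and}\quad \lambda \R(\vec t; \vec s; \vec p) \leq \lambda \R(\vec t'; \vec s; \vec p).
\end{align*}
Summing these two inequalities yields $\Lreg(\vec t; \vec s; \vec p) \leq \Lreg(\vec t'; \vec s; \vec p)$ for all $\vec t'$, so $\vec t \in \argmin_{\vec t'} \Lreg(\vec t'; \vec s; \vec p) = \Theta(\vec s; \vec p)$, which is exactly $\vec p \in \Theta_{\vec t}$. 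This is the same ``common minimizer of a nonnegative combination'' step that drives direction (1) of Theorem~\ref{thm:equivalence-iff}.

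There is essentially no hard obstacle here; the only point needing care is the separability remark, namely the identification of the product-of-level-sets definition $\Gamext(\vec p) = [\Gamma(p^\i)]_i$ with the joint minimizer of the additive loss $F$ over $\T^m$. Once that is in place, the containment is immediate and uses no structure of $\R$ beyond the fact that it is the objective whose minimizers define $H$.
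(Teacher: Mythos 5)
Your proof is correct and follows essentially the same route as the paper's: membership in both level sets gives $\vec t$ as a common minimizer of the loss term and of $\R(\cdot;\vec s;\vec p)$, and a nonnegatively weighted sum of the two resulting inequalities yields minimality for $\Lreg$. The only difference is that you make explicit the separability step identifying $\Gamext_{\vec t}$ with the joint minimizer of $\frac 1 m \sum_i L(\cdot; p^\i)$ over $\T^m$, which the paper leaves implicit; this is a reasonable point to spell out but does not change the argument.
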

\begin{proof}
    $\vec p \in \Gamext_{\vec t} \cap H_{\vec t; \vec s} \implies L(\vec t; \vec p) \leq L(\vec t'; \vec p)$ and $\R(\vec t; \vec s; \vec p) \leq \R(\vec t'; \vec s; \vec p)$ for all $\vec t' \in \T^m$, which in turn implies $L(\vec t; \vec p) + \R(\vec t; \vec s; \vec p) \leq L(\vec t'; \vec p) + \R(\vec t'; \vec s; \vec p) \implies (1-\lambda)L(\vec t; \vec p) + \lambda \R(\vec t; \vec s; \vec p) \leq (1-\lambda) L(\vec t'; \vec p) + \lambda \R(\vec t'; \vec s; \vec p)$ for all $t' \in \T^m$.
\end{proof}

We apply this result to the ``universally fair'' reports via demographic parity $\vec 0$ and $\ones$.

\begin{corollary}
    Fix $\vec s \in \Sc^m$ and $\lambda \in [0,1]$.
    Let $L$ elicit $\Gamma$ and $\L^{DP,\lambda}$ elicit $\Theta$.
    $\Gamext_{\vec 0}\subseteq \Theta_{\vec 0; \vec s}$.
    Moreover, $\Gamext_{\ones}\subseteq \Theta_{\vec s; \ones}$.
\end{corollary}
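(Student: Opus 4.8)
The plan is to obtain both inclusions as immediate consequences of Proposition~\ref{thm:subset-intersection}, which states $\Gamext_{\vec t} \cap H_{\vec t; \vec s} \subseteq \Theta_{\vec t}$ for every treatment $\vec t \in \T^m$, where $H = \prop{\R}$ is the property elicited by the demographic parity regularizer. Specializing to $\vec t = \vec 0$ and $\vec t = \ones$, it suffices to show that these two ``universally fair'' treatments lie in the DP level set at \emph{every} distribution, i.e. $H_{\vec 0; \vec s} = \simplex^m$ and $H_{\ones; \vec s} = \simplex^m$ for all $\vec s \in \Sc^m$. Granting this, the intersections in Proposition~\ref{thm:subset-intersection} collapse, $\Gamext_{\vec 0} \cap H_{\vec 0; \vec s} = \Gamext_{\vec 0}$ and $\Gamext_{\ones} \cap H_{\ones; \vec s} = \Gamext_{\ones}$, and the desired inclusions $\Gamext_{\vec 0} \subseteq \Theta_{\vec 0; \vec s}$ and $\Gamext_{\ones} \subseteq \Theta_{\ones; \vec s}$ follow at once.

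First I would check that $\vec 0$ and $\ones$ are global minimizers of the penalty in \eqref{eq:01-rep-dp} for any $\vec p$ and $\vec s$. Setting every $t^\i = 0$ makes both group averages zero, so the absolute-difference term equals $|0 - 0| = 0$; setting every $t^\i = 1$ gives group averages $\tfrac{1}{n_a}\sum_{i : s^\i = a} 1 = 1$ and $\tfrac{1}{n_b}\sum_{i : s^\i = b} 1 = 1$, so the term is again $|1 - 1| = 0$. Since the regularizer is nonnegative, each of $\vec 0$ and $\ones$ attains its infimum regardless of $\vec p$ and $\vec s$, which is exactly the statement $H_{\vec 0; \vec s} = H_{\ones; \vec s} = \simplex^m$ (equivalently, the already-noted $H(\vec s; \vec p) \supseteq \{\vec 0, \ones\}$). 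This computation is the entire mathematical content of the corollary; I do not expect any genuine obstacle, as the result is a bookkeeping consequence of Proposition~\ref{thm:subset-intersection} once the universal fairness of $\vec 0$ and $\ones$ is recorded.

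The one point deserving care is the range of $\lambda$: Proposition~\ref{thm:subset-intersection} is stated for $\lambda \in (0,1)$ while the corollary allows $\lambda \in [0,1]$. I would observe that the proposition's proof only forms a nonnegative convex combination of the two component inequalities $L(\vec t; \vec p) \le L(\vec t'; \vec p)$ and $\R(\vec t; \vec s; \vec p) \le \R(\vec t'; \vec s; \vec p)$, so it remains valid at the endpoints; alternatively one may dispatch them directly, since $\lambda = 0$ gives $\Theta = \Gamext$ and $\lambda = 1$ gives $\Theta = H \supseteq \{\vec 0, \ones\}$ everywhere, making both boundary inclusions trivial.
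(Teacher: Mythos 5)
Your proof is correct and follows the same route as the paper: both reduce the claim to Proposition~\ref{thm:subset-intersection} together with the observation that $\vec 0$ and $\ones$ always minimize the demographic parity penalty, so $\Gamext_{\vec 0} \cap H_{\vec 0;\vec s} = \Gamext_{\vec 0}$ and likewise for $\ones$. Your explicit zero-penalty computation and your handling of the endpoints $\lambda \in \{0,1\}$ are welcome details the paper leaves implicit, but they do not change the argument.
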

\begin{proof}
    Let $H := \prop{L^{DP,\lambda}}$.
    For all $\vec p \in \simplex^m$, we have $\{\vec 0, \vec 1\} \subseteq H(\vec p)$.
    Therefore, $\Gamext_{\vec 0} \cap H_{\vec 0} = \Gamext_{\vec 0}$ (and similarly with $\Gamext_\ones \cap H_\ones$).
    Therefore, $\Gamext_{\vec 0} = \Gamext_{\vec 0} \cap H_{\vec 0} \subseteq \Theta_{\vec 0}$ and $\Gamext_{\ones} = \Gamext_{\ones} \cap H_{\ones} \subseteq \Theta_{\ones}$.
\end{proof}

We now turn our attention to the opposite case: if, while regularized, treating different groups differently (and uniformly within the groups) is optimal, then it is also optimal in the unregularized setting.
In particular, this holds for treatments maximizing $\R$.
\begin{proposition}
    Fix $\vec s \in \{a,b\}^m$ and $\lambda \in [0,1]$.
    Fix $\vec t = \ones_a$ (or $\ones_b$ without loss of generality).
    Let $L$ elicit $\Gamma$ over outcomes $\Y = \{0,1\}$.
    $\Theta^{DP,\lambda}_{\vec t; \vec s} \subseteq \Gamext_{\vec t}$.
\end{proposition}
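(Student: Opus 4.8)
The plan is to exploit the fact that the chosen treatment $\vec t = \ones_a$ is a global \emph{maximizer} of the demographic parity regularizer, which is exactly the ``treatments maximizing $\R$'' case alluded to just before the statement. Under $\ones_a$ every group-$a$ agent receives treatment $1$ and every group-$b$ agent receives treatment $0$, so the rate difference inside the absolute value in~(\ref{eq:01-rep-dp}) equals $|1-0|=1$, the largest value the expression can attain (each group rate lies in $[0,1]$). Since both groups are nonempty ($n_g>0$) we genuinely have $\ones_a \neq \vec 0, \ones$, and $\R(\ones_a;\vec s;\vec p) = 1 \geq \R(\vec u;\vec s;\vec p)$ for every competitor $\vec u \in \T^m$; note also that the DP regularizer does not depend on $\vec p$. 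This is the one real idea, and it is the mirror image of Proposition~\ref{thm:subset-intersection}: because $\ones_a$ maximizes the penalty, the penalty term can only help any competitor, so optimality under the penalized objective must already force optimality under the loss alone.

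For the interior case $\lambda \in (0,1)$, I would fix an arbitrary $\vec p \in \Theta^{DP,\lambda}_{\ones_a;\vec s}$ and write out optimality of $\ones_a$ against an arbitrary $\vec u \in \T^m$:
\[ (1-\lambda)\tfrac1m\sum_i L(\ones_a^\i; p^\i) + \lambda \R(\ones_a;\vec s;\vec p) \;\le\; (1-\lambda)\tfrac1m\sum_i L(u^\i; p^\i) + \lambda \R(\vec u;\vec s;\vec p). \]
Rearranging and using that $\ones_a$ maximizes $\R$ gives
\[ (1-\lambda)\Bigl[\tfrac1m\sum_i L(\ones_a^\i;p^\i) - \tfrac1m\sum_i L(u^\i;p^\i)\Bigr] \;\le\; \lambda\bigl[\R(\vec u;\vec s;\vec p) - \R(\ones_a;\vec s;\vec p)\bigr] \;\le\; 0. \]
Dividing by $(1-\lambda)>0$ yields $\tfrac1m\sum_i L(\ones_a^\i;p^\i) \le \tfrac1m\sum_i L(u^\i;p^\i)$ for every $\vec u$, i.e.\ $\ones_a$ minimizes the unregularized average loss at $\vec p$.

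It then remains to pass from minimizing the aggregate loss to membership in $\Gamext_{\ones_a}$. Because the loss is additive across agents and $\T^m$ is a product set, any vector minimizing $\sum_i L(t^\i;p^\i)$ must minimize each summand separately (otherwise replacing a single coordinate by a per-agent minimizer strictly decreases the sum), so $\ones_a^\i \in \Gamma(p^\i)$ for each $i$, giving $\vec p \in \Gamext_{\ones_a}$. Finally I would dispatch the endpoints: $\lambda = 0$ makes $\Theta = \Gamext$ and the inclusion is immediate, while $\lambda = 1$ reduces the objective to $\R$, whose minimizers are the equal-rate treatments; since $\ones_a$ attains the maximal value $1 \ne 0$ of $\R$ it is never optimal, so $\Theta^{DP,1}_{\ones_a;\vec s} = \emptyset \subseteq \Gamext_{\ones_a}$ trivially. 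The only genuine obstacle is the opening observation that $\ones_a$ maximizes the regularizer; after that the argument is a one-line manipulation together with the separability of $L$.
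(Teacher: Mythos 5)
Your proof is correct and follows essentially the same route as the paper: observe that $\ones_a$ globally maximizes the demographic parity regularizer, so optimality under the regularized objective forces $(1-\lambda)L(\ones_a;\vec p)\le(1-\lambda)L(\vec t';\vec p)$ for all competitors. You are in fact slightly more careful than the paper, which glosses over the $\lambda=1$ endpoint (where the division by $1-\lambda$ fails and one needs your emptiness observation) and the final separability step passing from aggregate-loss minimization to membership in $\Gamext_{\ones_a}$.
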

\begin{proof}
    With $\vec s$ fixed, $t \in \argmax_{\vec t'} DP(\vec t';\vec p)$ for all $\vec p \in \simplex^m$.
    Therefore,
    \begin{align*}
        (1-\lambda) L(\vec t;\vec p) + \lambda DP(\vec t; \vec p) &\leq (1-\lambda) L(\vec t'; \vec p) + \lambda DP(\vec t'; \vec p) \forall \vec t' \\
        \implies (1-\lambda)L(\vec t;\vec p) &\leq (1-\lambda) L(\vec t';\vec p) \qquad \forall \vec t'~,
    \end{align*}
    which implies the result.
\end{proof}

With that, we partially characterize the relationship between the unregularized and DP-regularized level sets for standard binary classification.
In the simple case with $m=2$ agents, this characterization is complete: if the optimal treatment is uniform, it stays uniform.
Moreover, if the most ``unfair'' treatment wherein all the members of one group receive the treatment, and none of the second group is optimal in the regularized setting, it is also optimal in the unregularized setting.
In any other setting, the optimal treatment changes with the addition of a DP regularizer.

\subsection{Equalized FPR}\label{subsec:fpr}
Following a similar process to \S~\ref{subsec:dp}, we now consider the regularizer that measures the absolute difference of false positive rates across groups, where the false positive rate is given by $FPR_g(\vec t; \vec s; \vec p) = \Pr[Y^\i = 0 \mid t^\i = 1, s^\i = g] = \frac 1 {|\{i : t^\i = 1, s^\i = g\}|} \sum_{i : s^\i = g, t^\i = 1} (1 - p^\i)$.
The optimization problem then becomes
\begin{align}
    L^{FPR,\lambda}(\vec t; \vec s; \vec p) 
    &= \frac {1-\lambda} m \sum_i L(t^\i; p^\i)  + \lambda \left| FPR_a(\vec t; \vec s; \vec p) - FPR_b(\vec t; \vec s; \vec p) \right|     \tag{FPR}\label{eq:FPR-loss}
\end{align}

The FPR regularizer computes the difference of false positive rates between groups, so one can observe that the false positive rate of a group is is reduced by assigning more negative treatments $t^\i = 0$.
We can see in Figure~\ref{fig:fpr_example} that the FPR regularizer then makes it worse for an algorithm to assign the positive treatment to an agent $i$ even if $p^\i$ is slightly greater than $1/2$, as marked by the $\star$ in Figure~\ref{fig:fpr_example}(R). 

As in \S~\ref{subsec:dp}, we can apply Proposition~\ref{thm:subset-intersection} to show that if assigning everyone the negative treatment is optimal in the unregularized setting, it is also the optimal treatment with the FPR regularizer.
\begin{corollary}
    Fix $\vec s \in \Sc^m$.
    Let $L$ elicit $\Gamma$ and $L^{FPR,\lambda}$ elicit $\Theta^{FPR,\lambda}$.
    $\Gamext_{\vec 0} \subseteq \Theta^{FPR,\lambda}_{\vec 0; \vec s}$.
\end{corollary}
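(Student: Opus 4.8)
The plan is to mirror the proof of the demographic-parity corollary and apply Proposition~\ref{thm:subset-intersection} at the single treatment $\vec t = \vec 0$. Write $\R$ for the FPR regularizer, so that $\R(\vec t; \vec s; \vec p) = |FPR_a(\vec t; \vec s; \vec p) - FPR_b(\vec t; \vec s; \vec p)|$, and let $H := \prop{\R}$ be the property it elicits. The proposition tells us $\Gamext_{\vec 0} \cap H_{\vec 0; \vec s} \subseteq \Theta^{FPR,\lambda}_{\vec 0; \vec s}$ for the regularized property, so the entire corollary reduces to showing that intersecting $\Gamext_{\vec 0}$ with $H_{\vec 0; \vec s}$ changes nothing, i.e. that $\Gamext_{\vec 0} \subseteq H_{\vec 0; \vec s}$.

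The key step is therefore to show that the all-negative treatment $\vec 0$ is always a minimizer of $\R$, i.e. $\vec 0 \in H(\vec s; \vec p)$ for every $\vec p \in \simplex^m$, which gives the stronger fact $H_{\vec 0; \vec s} = \simplex^m$. Since $\R$ is valued in $\reals_+$, it suffices to verify $\R(\vec 0; \vec s; \vec p) = 0$, attaining the global minimum. Under $\vec t = \vec 0$ no agent receives the positive treatment, so for each group $g$ the index set $\{i : t^\i = 1, s^\i = g\}$ is empty and the false positive rate $FPR_g$ is an empty conditional. This is the only subtle point: the definition of $FPR_g$ carries a $0/0$ when no agent in group $g$ is assigned the positive treatment, so I must fix the natural convention $FPR_g(\vec 0; \vec s; \vec p) = 0$ (and more generally treat the regularizer as $0$ whenever both groups have no positive assignments). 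With this convention $FPR_a(\vec 0;\vec s;\vec p) = FPR_b(\vec 0;\vec s;\vec p) = 0$, hence $\R(\vec 0; \vec s; \vec p) = 0$ and $\vec 0 \in H(\vec s;\vec p)$ as claimed.

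Given $H_{\vec 0; \vec s} = \simplex^m$, the conclusion is immediate: $\Gamext_{\vec 0} \cap H_{\vec 0; \vec s} = \Gamext_{\vec 0} \cap \simplex^m = \Gamext_{\vec 0}$, and Proposition~\ref{thm:subset-intersection} instantiated at $\vec t = \vec 0$ yields $\Gamext_{\vec 0} = \Gamext_{\vec 0} \cap H_{\vec 0; \vec s} \subseteq \Theta^{FPR,\lambda}_{\vec 0; \vec s}$. I would close by remarking why, unlike the demographic-parity case, only the $\vec 0$ containment survives and not the $\ones$ analogue: at $\vec t = \ones$ every agent is assigned the positive treatment, so $FPR_g = \tfrac{1}{n_g}\sum_{i : s^\i = g}(1 - p^\i)$ genuinely depends on $\vec p$ and the two group rates need not coincide, so $\ones$ is not a universal minimizer of $\R$. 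The main obstacle here is not the argument itself, which is a short application of the proposition, but pinning down the $0/0$ convention in the FPR definition so that $\vec 0$ is legitimately a global minimizer of the regularizer.
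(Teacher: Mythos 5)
Your proof is correct and follows the same route as the paper's: establish that $\vec 0 \in H(\vec s;\vec p)$ for every $\vec p$ so that $\Gamext_{\vec 0} \cap H_{\vec 0;\vec s} = \Gamext_{\vec 0}$, then invoke Proposition~\ref{thm:subset-intersection}. Your explicit handling of the $0/0$ convention in $FPR_g$ at $\vec t = \vec 0$, and the remark on why the $\ones$ analogue fails, are useful clarifications the paper leaves implicit.
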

\begin{proof}
    For all $\vec p \in \simplex^m$, we have $\vec 0 \in H(p)$.
    Therefore $\Gamext_0 = \Gamext_0 \cap H_{\vec 0; \vec s} \subseteq \Theta_{\vec 0}$ by Proposition~\ref{thm:subset-intersection}.
\end{proof}

\begin{figure*}
\centering
\includegraphics[width=0.7\linewidth]{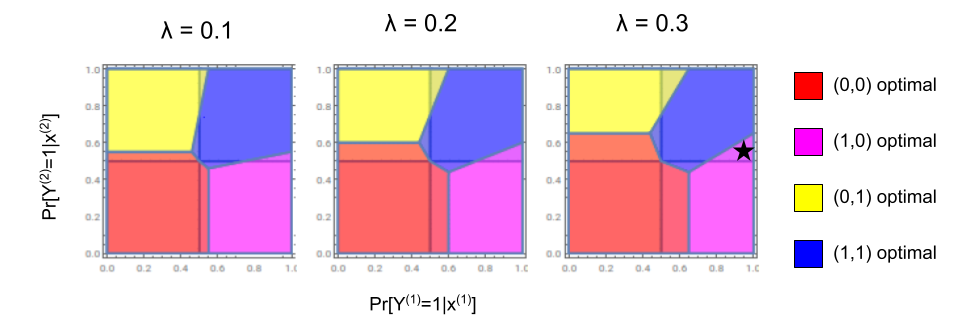}
    \caption{Visualizing the level sets of the $FPR$-regularized property $\Theta^{FPR,\lambda}$ for different values of $\lambda \in [0,1]$, where $m = 2$ and $s = (a,b)$. Each point $(p^{(1)}, p^{(2)})$ in a square represents $(\Pr_{p^{(1)}}[Y=1], \Pr_{p^{(2)}}[Y=1])$, and each colored cell represents sets of $(p^{(1)}, p^{(2)})$ pairs such that the optimal treatment is the same for all points in the cell. For example, the magenta cell (lower right) is the set of distributions where the decision-maker prefers to attribute the positive treatment ($t^{(1)} = 1$) to the first, and the negative treatment ($t^{(2)} = 0$) to the second agent.}
    \label{fig:fpr_example}
\end{figure*}

\subsection{Expected equality of opportunity}
While standard equality of opportunity (cf.~\citep{hardt2016equality}) requires access to observed labels, we are interested in equality of opportunity in expectation, and consider a variant that does not require access to labels proposed by \citet{blandin2022fairness}.
Consider the treatment space $\T = \{0,1\}^m$ and regularizer $\R(\vec t; \vec s; \vec p) = |EEO_a(\vec t; \vec s; \vec p) - EEO_b(\vec t; \vec s; \vec p)|$, where 
\begin{align}
    EEO_g(\vec t; \vec s; \vec p; g) &= \Pr_{i \sim [m]}[t^\i = 1 \mid y^\i = 1, s^\i = g] \tag{EEO}\label{eq:EEO}\\
    &= \frac{\Pr[Y^\i = 1 \mid t^\i = 1, s^\i = g]\Pr[t^\i = 1]}{\Pr[Y^\i = 1]} \nonumber\\
    &= \frac{\left(\frac 1 {|\{i : s^\i = g, t^\i = 1\}|}\sum_{i : t^\i = 1, s^\i = g}(p^\i)\right)(\sum_i t^\i)}{\sum_i p^\i}~\nonumber.
\end{align}

We can apply Proposition~\ref{thm:subset-intersection} to show that uniform treatment being optimal in the unregularized case implies it is also optimal with the EEO regularizer as well.

\begin{corollary}
    Fix $\vec s \in \Sc^m$, and let $L$ elicit $\Gamma$ over outcomes $\Y = \{0,1\}$ and $L^{EEO, \lambda}$ elicit $\Theta^{EEO, \lambda}$.
    $\Gamext_{\vec 0} \subseteq \Theta^{EEO,\lambda}_{\vec 0}$.
\end{corollary}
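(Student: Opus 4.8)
The plan is to follow the template of the preceding DP and FPR corollaries and reduce the claim to Proposition~\ref{thm:subset-intersection}. Writing $H := \prop{\R}$ for the property elicited by the EEO regularizer $\R(\vec t;\vec s;\vec p) = |EEO_a(\vec t;\vec s;\vec p) - EEO_b(\vec t;\vec s;\vec p)|$, the proposition guarantees $\Gamext_{\vec 0} \cap H_{\vec 0; \vec s} \subseteq \Theta^{EEO,\lambda}_{\vec 0}$ for every $\vec s \in \Sc^m$. Thus it is enough to show that $\vec 0 \in H(\vec p)$ for all $\vec p \in \simplex^m$, since then $\Gamext_{\vec 0} \subseteq H_{\vec 0;\vec s}$ and the intersection collapses to $\Gamext_{\vec 0}$ itself.

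First I would verify that $\vec 0$ globally minimizes $\R(\cdot;\vec s;\vec p)$. The regularizer is an absolute difference and hence nonnegative, so it suffices to show it attains the value $0$ at $\vec 0$. Evaluating the two group quantities at $\vec 0$, I would use the probabilistic form on the first line of~(\ref{eq:EEO}): when no agent receives the positive treatment, $\Pr[t^\i = 1 \mid y^\i = 1, s^\i = g] = 0$ for each group $g$, so $EEO_a(\vec 0) = EEO_b(\vec 0) = 0$ and $\R(\vec 0;\vec s;\vec p) = |0 - 0| = 0$. Equivalently, in the explicit ratio form the multiplicative factor $\sum_i t^\i$ vanishes, forcing each $EEO_g$ to $0$. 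Either way $\vec 0 \in \argmin_{\vec t} \R(\vec t;\vec s;\vec p) = H(\vec p)$ for every $\vec p$.

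With $\vec 0 \in H(\vec p)$ for all $\vec p$, I have $\Gamext_{\vec 0} \subseteq H_{\vec 0;\vec s}$, hence $\Gamext_{\vec 0} \cap H_{\vec 0;\vec s} = \Gamext_{\vec 0}$. Applying Proposition~\ref{thm:subset-intersection} with $\vec t = \vec 0$ then yields $\Gamext_{\vec 0} = \Gamext_{\vec 0} \cap H_{\vec 0;\vec s} \subseteq \Theta^{EEO,\lambda}_{\vec 0}$, which is the claim.

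The only delicate point, and the one I would be most careful about, is the $0/0$ degeneracy in the explicit ratio definition of $EEO_g$ when $\vec t = \vec 0$: the inner average over $\{i : s^\i = g, t^\i = 1\}$ ranges over an empty index set. I would resolve this by taking the conditional-probability expression on the first line of~(\ref{eq:EEO}) as the defining one (giving $0$ unambiguously), or by observing that the factor $\sum_i t^\i = 0$ annihilates the product regardless of how the empty average is interpreted; everything else is the same bookkeeping as in the FPR corollary.
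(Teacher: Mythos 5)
Your proposal is correct and follows the same route the paper intends: the corollary is stated immediately after the remark that Proposition~\ref{thm:subset-intersection} applies, and the paper's analogous FPR corollary is proved exactly by showing $\vec 0 \in H(\vec p)$ for all $\vec p$ and then invoking that proposition. Your additional care about the $0/0$ degeneracy in the ratio form of $EEO_g$ is a sensible clarification the paper glosses over, but it does not change the argument.
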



\begin{figure*}
\centering
\includegraphics[width=0.7\textwidth]{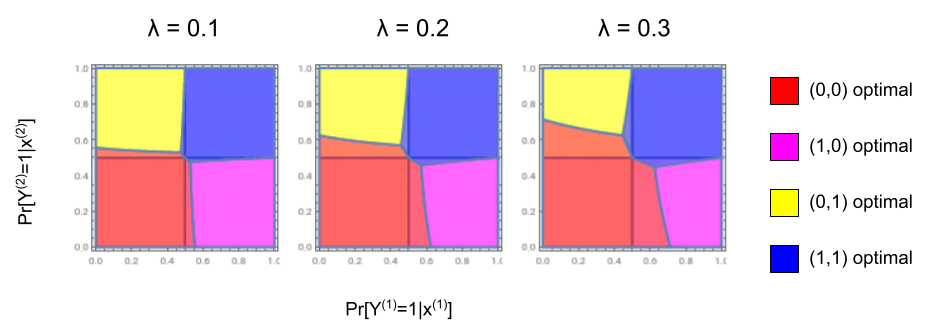}
    \caption{Level sets of the EEO-regularized mode on $\Y = \{0,1\}$.}
    \label{fig:eeo_example}
\end{figure*}




\subsection{Equivalent regularizers}\label{sec:equivalent}

In the previous section, we use Theorem~\ref{thm:equivalence-iff} to show the nonequivalence of reguliarized properties, and examine a few common regularizers to show some restrictions that recover equivalence under certain distributional assumptions on the outcomes.
We examine two regularizers that elicit the mode, and thus the regularized property is equivalent to the unregularized on all of $\simplex^m$: (multi)calibration~\citep{pleiss2017fairness,hebert2018multicalibration,jung2021moment,noarov2023statistical} and bounded group loss~\citep{agarwal2019fair}.
In some sense, this suggests that these regularizers value accuracy as fairness.
If models are as accurate as they could possibly be, the most ``fair'' treatments to assign are also the most accurate.
In practice, the regularizers mitigate unfairness arising from limited expressivity of the model: if the model was perfectly expressive and could predict the mode perfectly, it would assign the same treatments even with heavy penalties for ``unfairness.''

\paragraph{Calibration}
Calibration constraints ensure that the predicted value $t^\i$ most closely lines up with the true probability $p^\i$, regularizing the loss by the sums of the absolute differences $|t^\i - p^\i|$.
The absolute difference elicits the $1/2$-quantile, which is also the mode on $\Y = \{0,1\}$, so the regularizer $\R(\vec t;\vec s; \vec p) = \sum_g \frac 1 {n_g}\sum_{i:s^\i = g} |t^\i - p^\i|$ elicits the mode in binary classification problems.

Formally, consider the objective
\begin{align}
    L^{Cal,\lambda}(\vec t; \vec s; \vec p) &= \frac {1-\lambda} m \sum_i L(t^\i; p^\i) + \lambda \sum_g \frac 1 {n_g} \sum_{i : s^\i = g} |t^\i - p^\i| \tag{Cal}\label{eq:calibration}
\end{align}
This constraint does not include any comparisons across group averages, so the optimal report is obtained by giving individual predictions.
In binary classification, the $1/2$-quantile is the same as the mode, so the property is given $\Theta^{Cal,\lambda}(\vec s; \vec p) = \mathrm{mode}(\vec p)$.

This observation holds even with different weightings for specific subgroups, as in multicalibration a l\'a \citet{hebert2018multicalibration}.


\paragraph{Bounded group loss}
We now consider the constraint on bounded group loss: $\E_{Y \mid S = s}L(r, Y) < \epsilon$ for all $s \in \Sc$, introduced by~\citet{agarwal2019fair}.
To model bounded group loss as a soft constraint, we simply weigh the expected loss conditioned on the group size as a regularizer, so accuracy is more incentivized on small groups.
\begin{align*}
    L^{BGL,\lambda}(\vec t; \vec s; \vec p) &= \frac {1-\lambda} m \sum_i L(t^\i; p^\i ) + \sum_g \frac \lambda {n_g} \sum_{i: s^\i = g}  L(t^\i; p^\i)
\end{align*}

Adding this constraint as a fairness regularizer does not change the property elicited (e.g., $\Theta^{BGL,\lambda}(\vec s; \vec p) = \Gamext(\vec p)$ for all $p \in \simplex^m$).
In part this is because it still encourages the model to learn what is best for each individual in the population, where other constraints add a regularizer that compares the deviation between two groups.

\begin{corollary}\label{cor:bgl-equiv}
Let $L$ elicit $\Gamma$.
$\Gamext \equivs \Theta^{BGL, \lambda}$ for all $\vec s \in \Sc^m$ and $\lambda \in [0,1]$.
\end{corollary}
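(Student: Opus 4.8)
The plan is to observe that, unlike the group-comparison regularizers of the previous subsections, the bounded group loss objective $L^{BGL,\lambda}$ is \emph{itself separable across individuals} and assigns each individual's loss a strictly positive weight. Consequently its minimization over $\T^m$ decouples into $m$ independent single-coordinate minimizations, each of which returns $\Gamma(p^\i)$, so that $\Theta^{BGL,\lambda}(\vec s;\vec p) = \Gamext(\vec p)$ identically and the claimed equivalence is immediate.

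First I would collect, for each index $i$, the two places it appears in $L^{BGL,\lambda}$ --- the expected-loss term and the single group term with $g = s^\i$ --- and rewrite the objective as $L^{BGL,\lambda}(\vec t;\vec s;\vec p) = \sum_i c_i\, L(t^\i;p^\i)$ with $c_i := \frac{1-\lambda}{m} + \frac{\lambda}{n_{s^\i}}$. Since every $n_g$ is assumed positive, $c_i > 0$ for all $\lambda \in [0,1]$: on $(0,1)$ both summands are positive, at $\lambda = 0$ the first equals $1/m > 0$, and at $\lambda = 1$ the second equals $1/n_{s^\i} > 0$. This positivity at both endpoints is what lets the corollary claim the full closed interval $[0,1]$, whereas Theorem~\ref{thm:equivalence-iff} is stated only for $\lambda \in (0,1)$.

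Next, because $L^{BGL,\lambda}$ is a sum of terms each depending on a single coordinate $t^\i$, its minimizing set over $\T^m$ is exactly the Cartesian product of the per-coordinate minimizing sets; and since scaling by the positive constant $c_i$ leaves the argmin unchanged, each per-coordinate set is $\argmin_{t^\i} L(t^\i;p^\i) = \Gamma(p^\i)$. Thus $\Theta^{BGL,\lambda}(\vec s;\vec p) = [\Gamma(p^\i)]_i = \Gamext(\vec p)$ for every $\vec s$ and $\vec p$, which is precisely $\Gamext \equivs \Theta^{BGL,\lambda}$ in the sense of Definition~\ref{def:property-equivalence}. For $\lambda \in (0,1)$ one may alternatively note that the regularizer alone is separable with positive weights $1/n_{s^\i}$, hence elicits $H = \Gamext$, and invoke the easy direction (1) of Theorem~\ref{thm:equivalence-iff}.

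The computation is routine; the only place demanding care is the set-valued equality. I must establish not merely that some minimizer agrees with $\Gamext(\vec p)$, but that the minimizing \emph{sets} coincide, since Definition~\ref{def:property-equivalence} requires the two-way membership $\vec t \in \Gamext(\vec p) \iff \vec t \in \Theta^{BGL,\lambda}(\vec s;\vec p)$. This is exactly the statement that the minimizer set of a separable sum factors as a product of coordinatewise minimizer sets, which I would state explicitly rather than gloss over.
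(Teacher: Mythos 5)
Your proof is correct and takes essentially the same approach as the paper, whose one-line argument likewise rests on the observation that the BGL objective is a positively-weighted, coordinate-separable reweighing of $L$ and hence elicits $\Gamext$. Your version simply makes explicit the factorization of the minimizer set over coordinates and the positivity of the weights at the endpoints $\lambda \in \{0,1\}$, both of which the paper leaves implicit.
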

\begin{proof}
The regularizer $\R(\vec t; \vec s; \vec p) := \sum_g \frac 1 {n_g} \sum_{i : s^\i = g} L(t^\i; p^\i)$ is additive in $\vec t$, and elicits the same property as $L$ since it is simply a reweighing of $L$.
\end{proof}







\section{Experiments}\label{sec:experiments}
While property elicitation allows us to reason about what a treatment an algorithm \emph{should} assign, we examine whether or not these decisions are consistent with the treatments assigned by algorithms in practice with simple models.
We first generate a set of synthetic datasets to understand how a classifier's decisions change as one navigates the space of data distributions.
Moving through this space demonstrates the relationship between loss and regularizer in the synthetic setting as the data distribution over changes in $\simplex^m$.
We then evaluate the effect of the regularizer weight $\lambda$ on treatment assignment in cardiovascular disease risk prediction~\citep{heart} and lending~\citep{kamiran2009classifying} datasets, where the data distribution is fixed.
In both settings, we train a linear classifier over $30$ trials with binary cross entropy loss with (a) no regularizer, (b) demographic parity difference (c) false positive rate difference (d) false negative rate difference, (e) equality of opportunity difference, and compute the fairness violations of the classifier trained on each of the four losses, where elicited property values are shown in Figure~\ref{fig:level_sets_lambda}.

\subsection{Effect of the data distribution}\label{subsec:data-distribution-experiments}
Recall that we applied Theorem~\ref{thm:equivalence-iff} and its intuition in Figures~\ref{fig:dp_example}, \ref{fig:fpr_example}, and \ref{fig:eeo_example} to conclude the mode is not equivalent to $\Theta^{\R,\lambda}$ for various regularizers including \eqref{eq:01-rep-dp}, \eqref{eq:FPR-loss}, False Negative Rates (in \S~\ref{app:omitted-regs}), and Expected Equality of Opportunity~\eqref{eq:EEO}.
However, the equivalence of regularized properties and their unregularized counterparts is a rather strong condition, as pointwise equivalence must hold for \emph{every} set of data distributions.
In practice, the true data distribution may be somewhere in the space of distributions where the property value does not change for the chosen value of $\lambda$. 
With the knowledge that equivalent distributions have no endogeneous differences in hand, we generate a set of synthetic distributions to understand tradeoffs to regularizers as we move though the space of data distributions.

We generate generate synthetic datasets for binary classification as follows: there are two groups, $\Sc = \{a, b\}$ with $\Pr[a] = \Pr[b] = 1/2$, a member of each group has $\Pr[Y = 1 \mid S = g] = p_g \in [0,1]$.
Each set of agents is represented by $x = \{p_a, p_b, r_1, \ldots, r_k\}$, where $r_1, \ldots, r_k$ are uniformly random values in $[-1,1]$.
We then train a logistic regressor via stochastic gradient descent (30 trials with learning rate = 0.001, 1500 epochs, 10000 $(p_a, p_b)$ pairs, $k = 3$), that minimizes the binary cross entropy loss regularized by either demographic parity, false positive rate, false negative rate, or difference in equality of opportunity with $\lambda = 0.15$.
The simplicity of features is intentional: the ``perfect'' decision should be fully realizable in the unregularized setting, so the benchmark accuracy should be relatively high.
Fixing the probability for a positive outcome $p_a = 0.3$ for a member of group $a$, we vary the probability of a positive outcome $p_b$ for a member of group $b$ to observe how fairness violations change as the underlying data distribution changes.
For intuition, by design of the datasets, we reason about the ``average member'' of the population and reference the level sets drawn in Figure~\ref{fig:synthetic_compare_props}.
Fixing $p_a$ and varying $p_b$ can be thought of as understanding what happens in decision making as one moves vertically up the line $\{(0.3, p_b)\mid p_b \in [0,1]\}$, denoted by the black dashed lines in Figure~\ref{fig:synthetic_compare_props}.
In Figure~\ref{fig:violations_synthetic_simple} (L), we observe a a significant difference in DP violation rate only when $p_b \geq 1/2$, in line with the intuition in Figure~\ref{fig:synthetic-compare-props}.
Similarly, the false positive rate violation gap ``opens up'' for $p_b \in [0.5, 0.65]$ in Figure~\ref{fig:violations_synthetic_simple} (ML), in line with Figure ~\ref{fig:synthetic_compare_props}, and no significant different in FNR violations is observed as decision-making on this axis does not change in Figure~\ref{fig:synthetic_compare_props}.
Finally, for EEO, this gap opens for $p_b \geq 1/2$, then closes again later.

\begin{figure}
\begin{minipage}{0.24\linewidth}
    \centering
    \includegraphics[width=\linewidth]{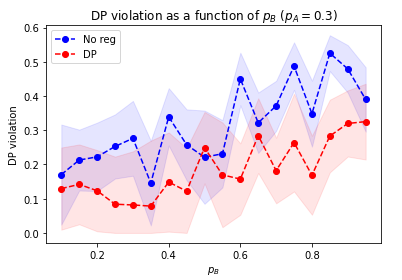}
    \label{fig:synthetic_dpviol_simple}
\end{minipage}
\hfill
\begin{minipage}{0.24\linewidth}
    \centering
    \includegraphics[width=\linewidth]{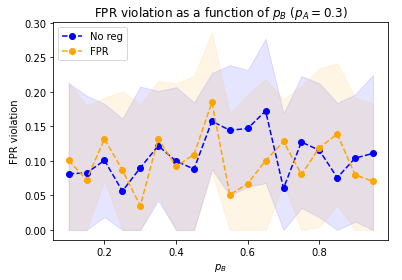}
    \label{fig:synthetic_fprviol_simple}
\end{minipage}
\begin{minipage}{0.24\linewidth}
    \centering
    \includegraphics[width=\linewidth]{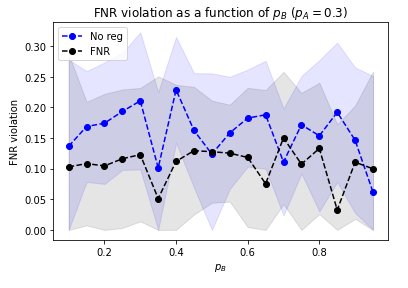}
    \label{fig:synthetic_fnrviol_simple}
\end{minipage}
\begin{minipage}{0.24\linewidth}
    \centering
    \includegraphics[width=\linewidth]{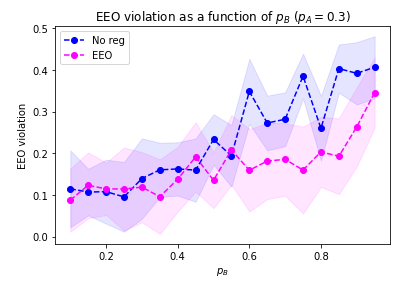}
    \label{fig:synthetic_eeoviol_simple}
\end{minipage}
\caption{Regularizer values with synthetic data generated via $\Pr[Y=1 \mid g = a] = 0.3$ and $\Pr[Y=1 \mid g = b]$ on the horizontal axis.}
\label{fig:violations_synthetic_simple}
\end{figure}

\begin{figure}
    \centering
    \includegraphics[width=\linewidth]{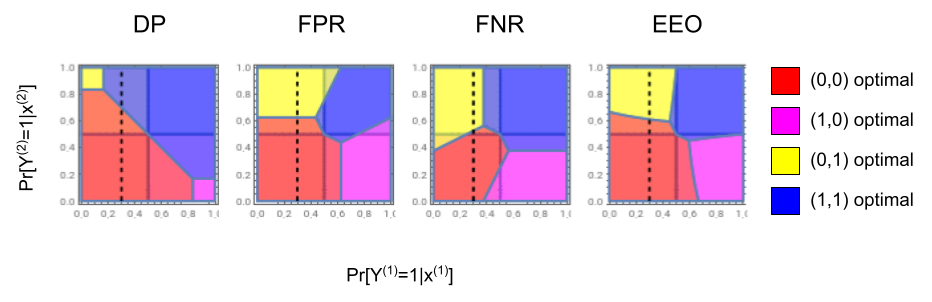}
    \caption{Fixing $p_a = 0.3$, examining how the property value changes as a function of $p_b$ for different regularizers. Demographic parity results in different decisions only if $p_b \in [1/2, 3/4]$, FPR if $p_b \in [1/2, 2/3]$, FNR has essentially the same property values on the line $p_a = 0.3$, and EEO leads to a small region where optimal decisions change for $p_b \in [1/2, 2/3]$.}
    \label{fig:synthetic_compare_props}
\end{figure}

\subsection{The effect of choice of $\lambda$}
Conversely to the interpretation of the experiments in \S~\ref{subsec:data-distribution-experiments}, to gain intuition for why decisions might change as a function of $\lambda$, we now consider each dataset representing a $(p_a, p_b)$ point in one of Figures~\ref{fig:dp_example}--\ref{fig:eeo_example}, and consider how the level set it belongs to changes as one changes $\lambda$.
We examine two datasets, German lending~\citep{kamiran2009classifying} and heart disease risk prediction~\citep{heart}. 
For both datasets, we train 30 linear models with 15000 epochs, learning rate of $0.001$.

\paragraph{German lending}
In the German lending dataset, we treat age as the sensitive attribute, using an indicator thresholded at 25 years old. On the entire dataset, we have $\Pr[Y = 1 \mid S \geq 25] = 0.728$ and $\Pr[Y = 1 \mid S < 25] = 0.578$, and an unbalanced group representation with $\Pr[S < 25 ] = 0.191$.

Perhaps surprisingly, we observe little impact of the choice of $\lambda$; moreover, in Figure~\ref{fig:violations_german}, we observe no significant difference in the performance across fairness metrics from regularized and unregularized losses.
Upon closer inspection, this can be explained partly by the observation that the ``average'' group members $(p_a, p_b) = (0.728, 0.578)$: a distribution that warrants treating the average member of each subpopulation the same, which aligns with most fairness regularizers.
This is demonstrated in Figure~\ref{fig:level_sets_lambda}, where the $(p_a, p_b)$ coordinate is denoted by a \emph{g}, for German.
For every subfigure in Figure~\ref{fig:level_sets_lambda}, the \emph{g} coordinate is in the blue cell, implying that the ``average member'' of each group receives the same treatment with a fairness-regularized loss as with an unregularized loss, suggesting that for the probability distribution underlying this dataset, data subjects are already treated approximately fairly by the unregularized loss.

\begin{figure}
\begin{minipage}{0.19\linewidth}
    \centering
    \includegraphics[width=\linewidth]{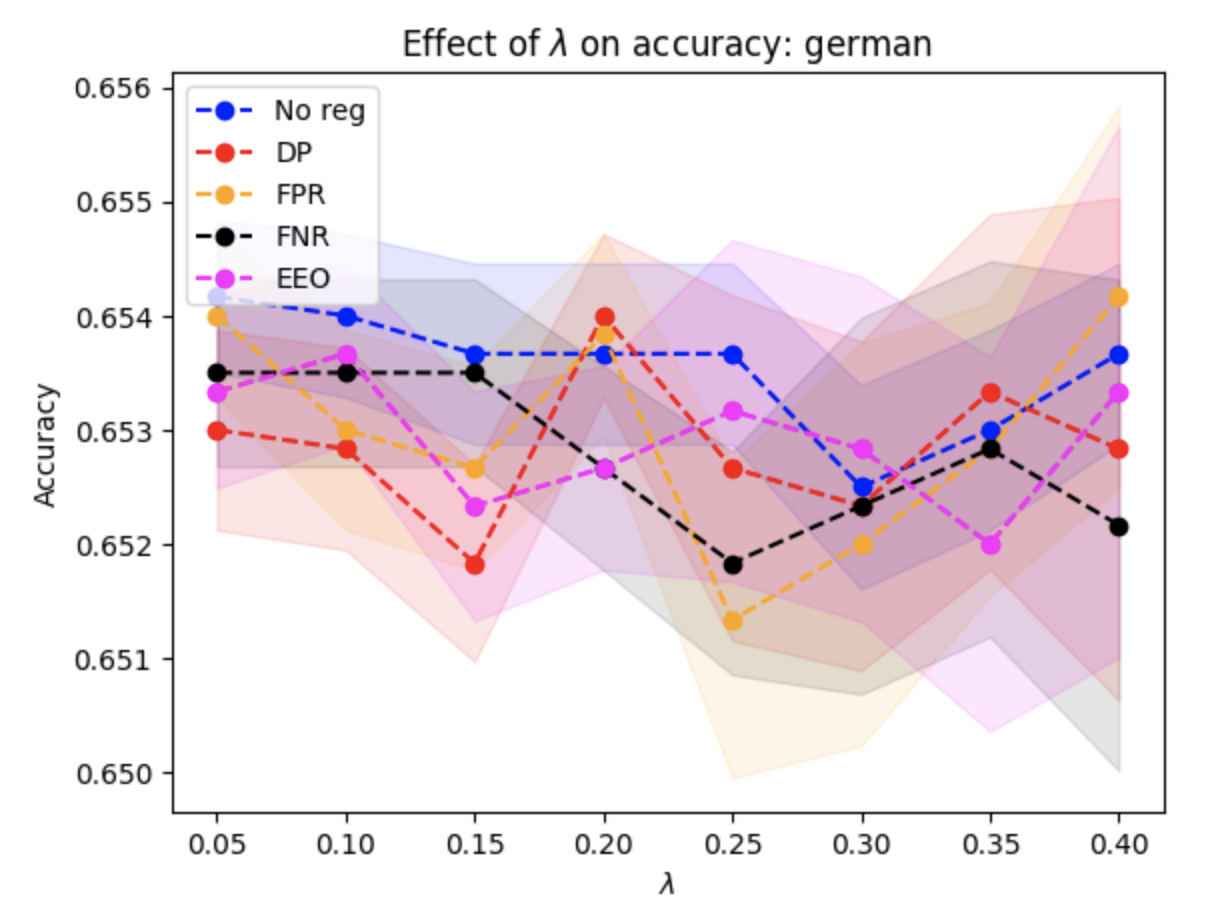}
    \label{fig:german_acc}
\end{minipage}
\hfill
\begin{minipage}{0.19\linewidth}
    \centering
    \includegraphics[width=\linewidth]{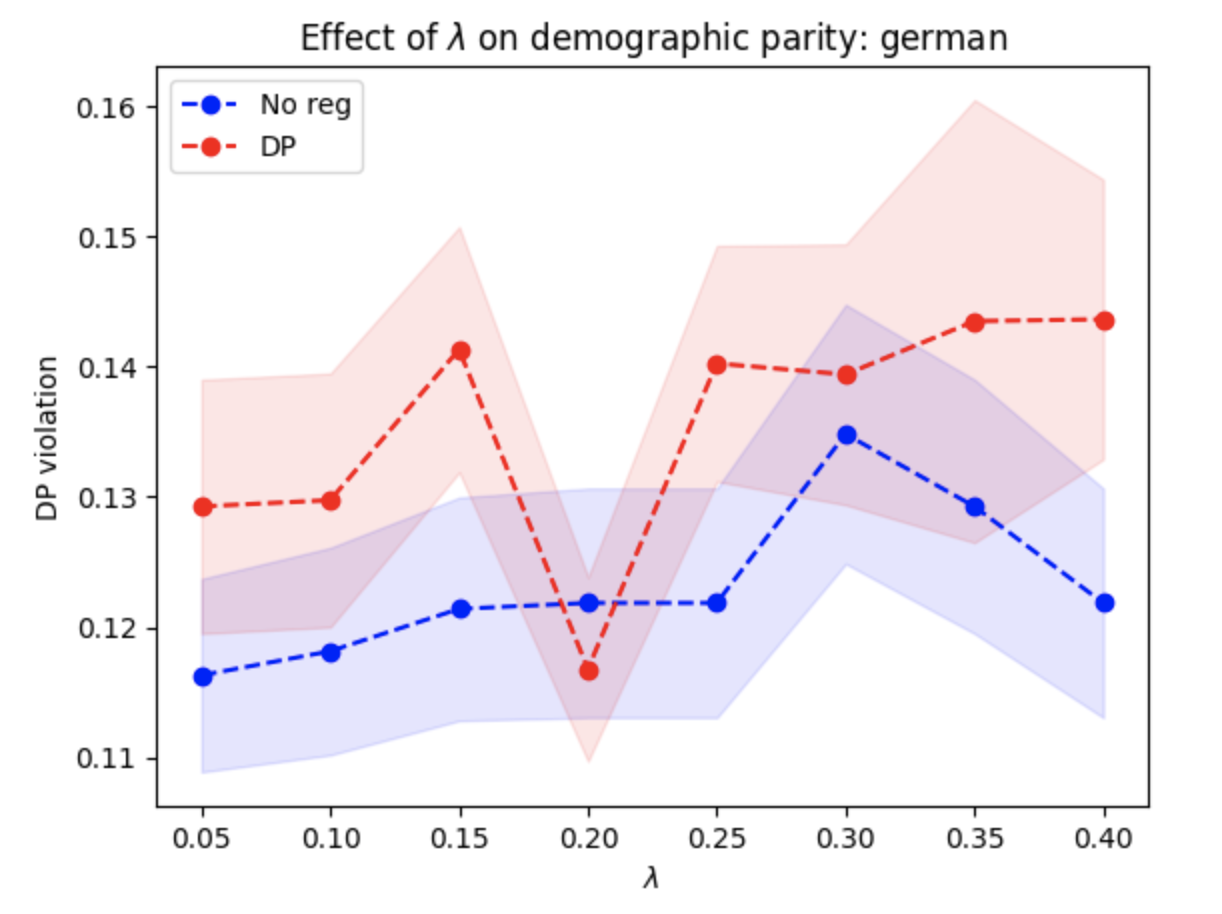}
    \label{fig:german_DP}
\end{minipage}
\hfill
\begin{minipage}{0.19\linewidth}
    \centering
    \includegraphics[width=\linewidth]{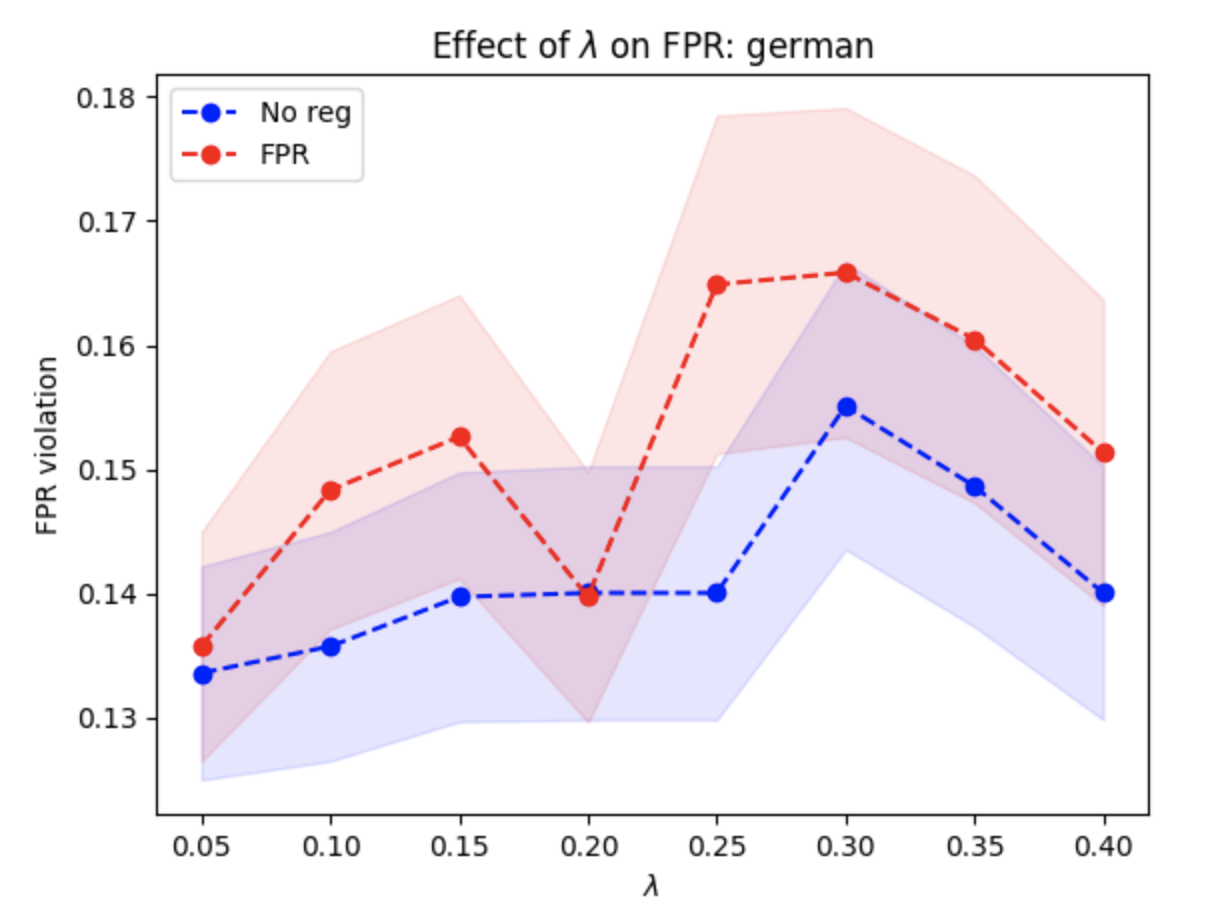}
    \label{fig:german_FPR}
\end{minipage}
\hfill
\begin{minipage}{0.19\linewidth}
    \centering
    \includegraphics[width=\linewidth]{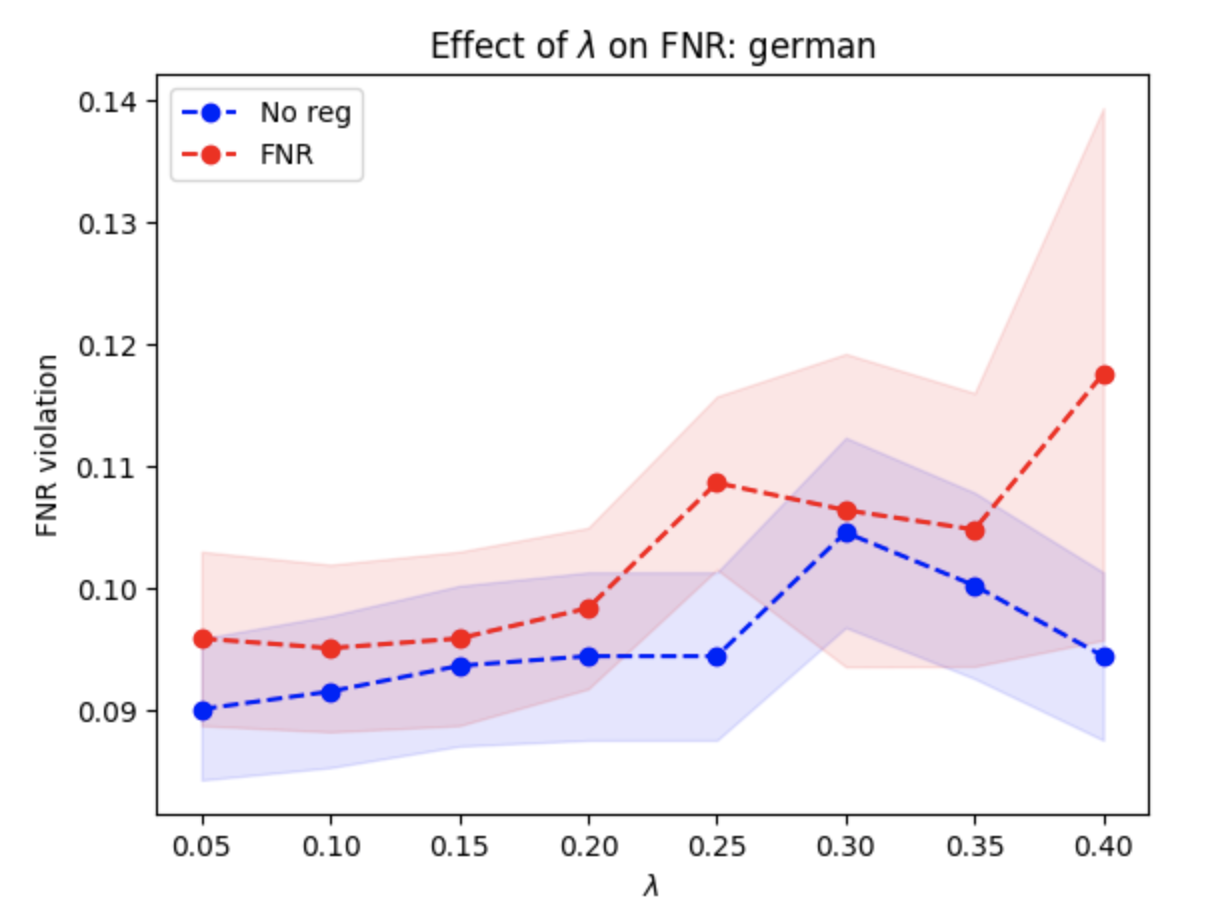}
    \label{fig:german_FNR}
\end{minipage}
\hfill
\begin{minipage}{0.19\linewidth}
    \centering
    \includegraphics[width=\linewidth]{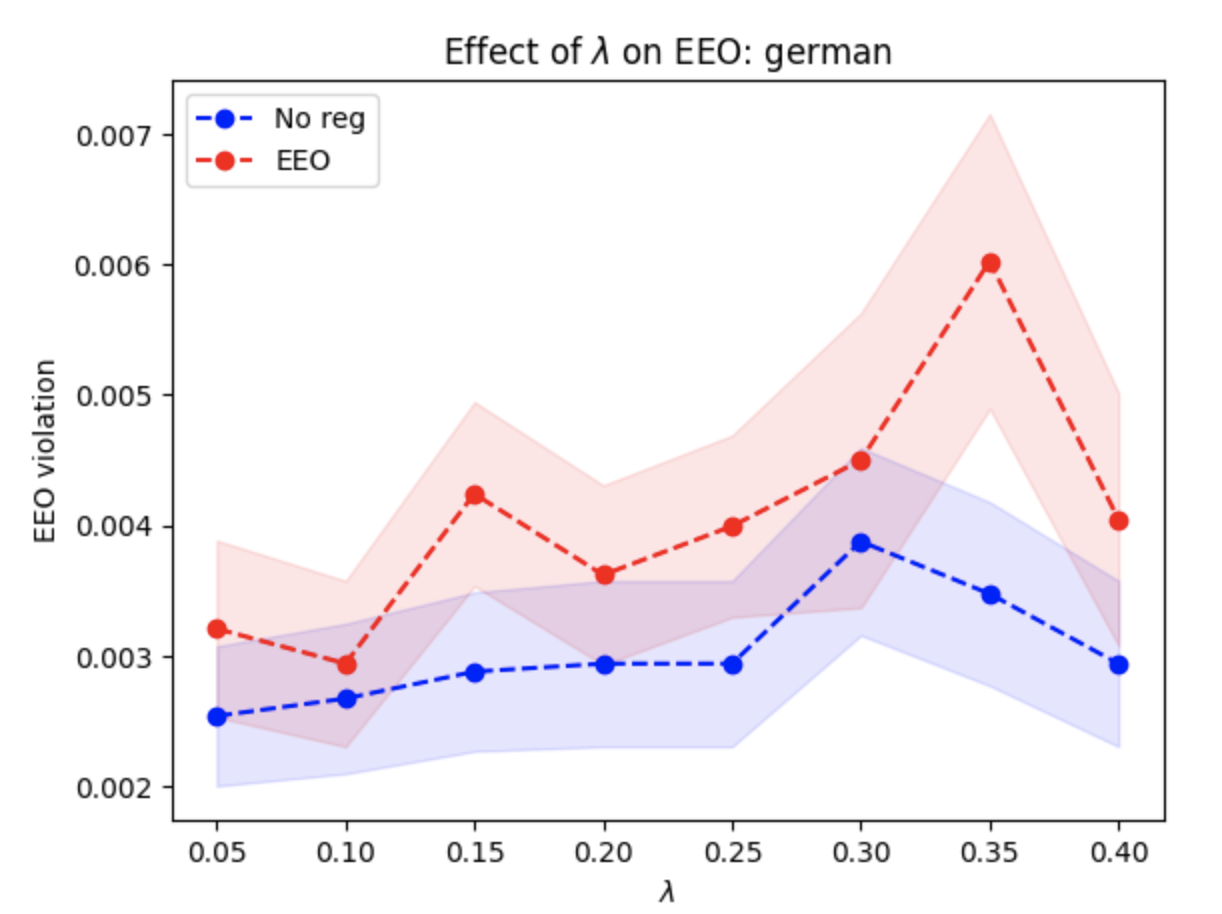}
    \label{fig:german_EEO}
\end{minipage}
\caption{Effect of $\lambda$ on regularizer values on the German lending dataset~\citep{kamiran2009classifying}. Because the $(p_a, p_b)$ point summarizing group differences in the dataset are at a point where regularized decisions are the same as unregluarized decisions, it is unsurprising that regularizers do not significantly reduce unfairness, regardless of $\lambda$.}
\label{fig:violations_german}
\end{figure}

\begin{figure}
\begin{minipage}{0.19\linewidth}
    \centering
    \includegraphics[width=\linewidth]{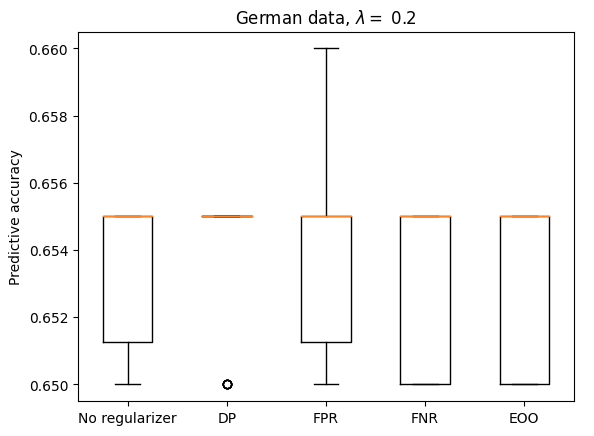}
    \label{fig:german_acc_dist}
\end{minipage}
\hfill
\begin{minipage}{0.19\linewidth}
    \centering
    \includegraphics[width=\linewidth]{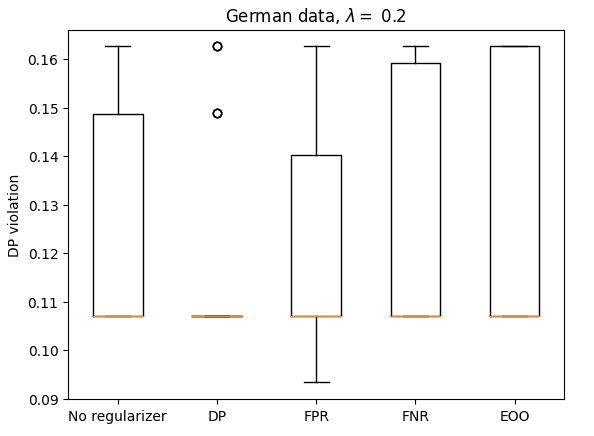}
    \label{fig:german_DP_dist}
\end{minipage}
\hfill
\begin{minipage}{0.19\linewidth}
    \centering
    \includegraphics[width=\linewidth]{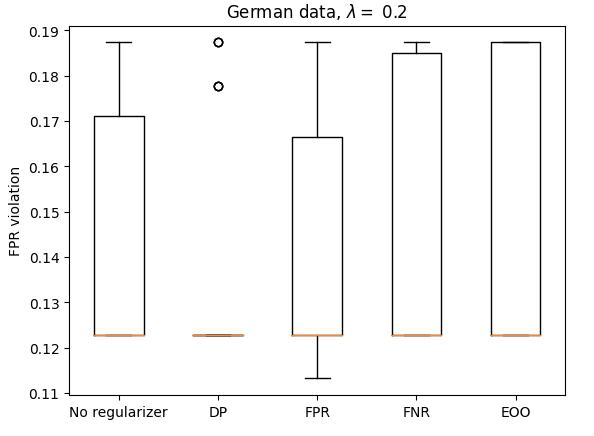}
    \label{fig:german_FPR_dist}
\end{minipage}
\hfill
\begin{minipage}{0.19\linewidth}
    \centering
    \includegraphics[width=\linewidth]{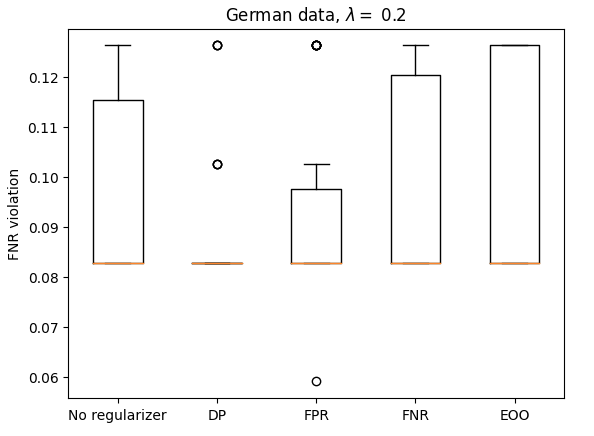}
    \label{fig:german_FNR_dist}
\end{minipage}
\hfill
\begin{minipage}{0.19\linewidth}
    \centering
    \includegraphics[width=\linewidth]{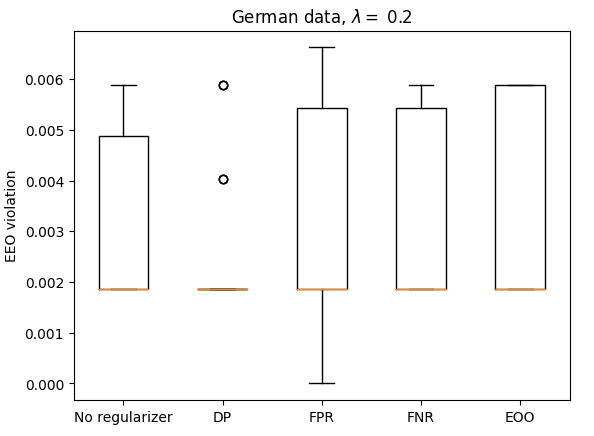}
    \label{fig:german_EEO_dist}
\end{minipage}
\caption{Distributions of accuracy and fairness violations in lending data. In general, it seems the models are tending to make similar predictions, which often nearly equal medians.}
\label{fig:violations_german_dists}
\end{figure}

\begin{figure}
\centering
\begin{minipage}{0.7\linewidth}
    \centering
    \includegraphics[width=\linewidth]{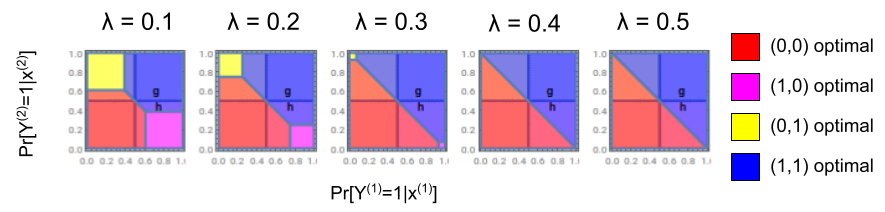}
    \label{fig:dp_ls_lambda}
\end{minipage}
\hfill
\begin{minipage}{0.7\linewidth}
    \centering
    \includegraphics[width=\linewidth]{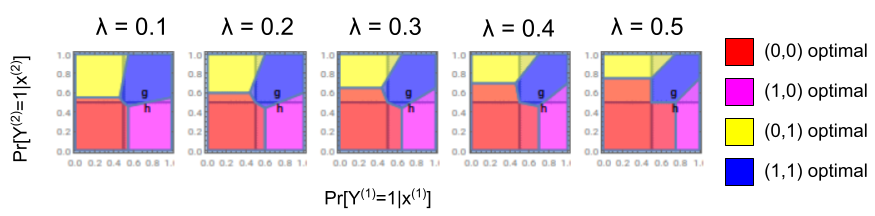}
    \label{fig:fpr_ls_lambda}
\end{minipage}
\hfill
\begin{minipage}{0.7\linewidth}
    \centering
    \includegraphics[width=\linewidth]{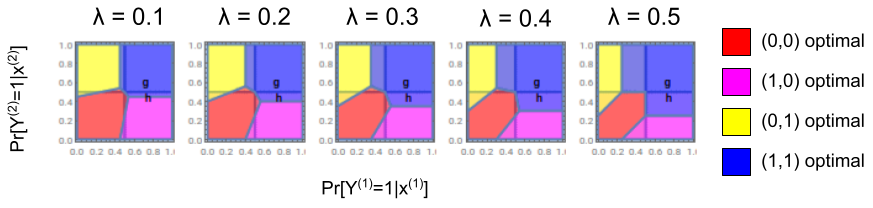}
    \label{fig:fnr_ls_lambda}
\end{minipage}
\hfill
\begin{minipage}{0.7\linewidth}
    \centering
    \includegraphics[width=\linewidth]{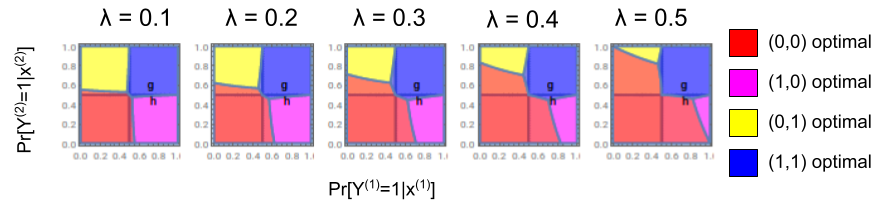}
    \label{fig:eeo_ls_lambda}
\end{minipage}
\caption{The level sets of different regularized properties as $\lambda$ changes. (Top to bottom: DP, FPR, FNR, EEO). The $g$ represents the ``average'' members of each group in the German lending dataset, and $h$ the heart disease risk dataset.} 
\label{fig:level_sets_lambda}
\end{figure}

\paragraph{Heart disease risk}
In the heart disease risk prediction dataset, we treat sex as the sensitive attribute, and observe $\Pr[Y=1 \mid S = 0] = 0.75$ and $\Pr[Y=1 \mid S = 1] = 0.449$ yields a $p_a, p_b$ pair warranting different treatments for the ``average'' member of each group, and $\Pr[S = 1] = 0.63$ for a more sensitive-attribute-balanced dataset.
The relationship between the optimal treatment of the ``average'' member of both groups as $\lambda$ changes can be seen in Figure~\ref{fig:level_sets_lambda}, denoted by $h$.

Figure~\ref{fig:violations_heart} shows the tradeoffs incurred by large weights on fairness violations, as accuracy of regularized losses tends to drop for $\lambda > 0.3$, which aligns with some of the improvements in fairness violations-- namely for demographic parity and false positive rates.
There is no significant difference in the FNR violation, regardless of $\lambda$, and an increase in the EEO violation; we conjecture this is due to numerical stability as the baseline EEO violation is very small.
In Figure~\ref{fig:violations_heart_dists} (R), this is supported by a higher range of EEO violations in the regularized models.

\begin{figure}
\begin{minipage}{0.19\linewidth}
    \centering
    \includegraphics[width=\linewidth]{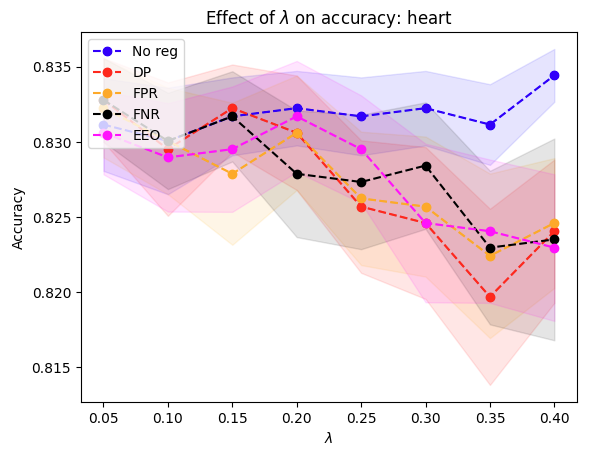}
    \label{fig:heart_acc}
\end{minipage}
\hfill
\begin{minipage}{0.19\linewidth}
    \centering
    \includegraphics[width=\linewidth]{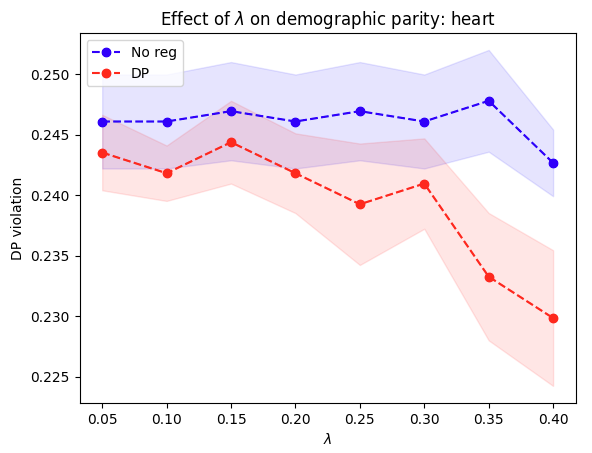}
    \label{fig:heart_DP}
\end{minipage}
\hfill
\begin{minipage}{0.19\linewidth}
    \centering
    \includegraphics[width=\linewidth]{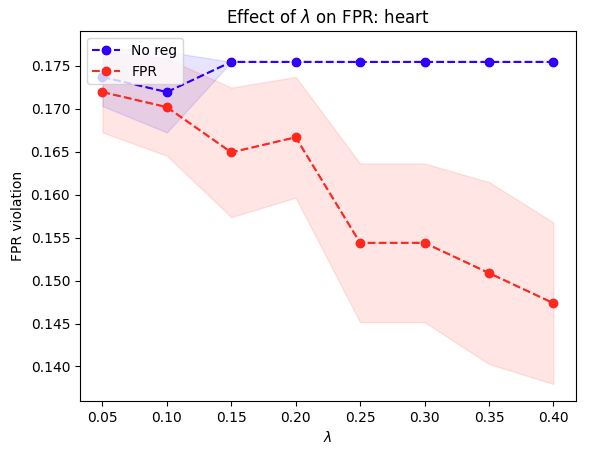}
    \label{fig:heart_FPR}
\end{minipage}
\hfill
\begin{minipage}{0.19\linewidth}
    \centering
    \includegraphics[width=\linewidth]{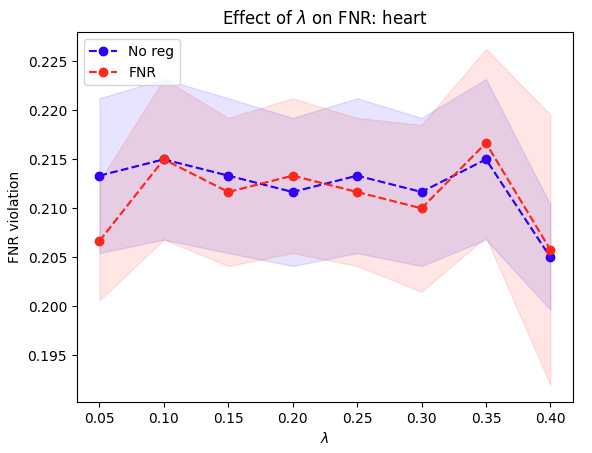}
    \label{fig:heart_FNR}
\end{minipage}
\hfill
\begin{minipage}{0.19\linewidth}
    \centering
    \includegraphics[width=\linewidth]{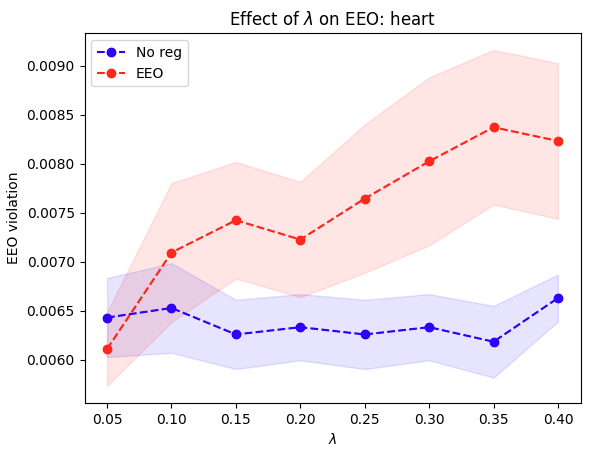}
    \label{fig:heart_EEO}
\end{minipage}
\caption{Effect of $\lambda$ on regularizer values on the heart disease risk dataset~\citep{heart}.}
\label{fig:violations_heart}
\end{figure}

\begin{figure}
\begin{minipage}{0.19\linewidth}
    \centering
    \includegraphics[width=\linewidth]{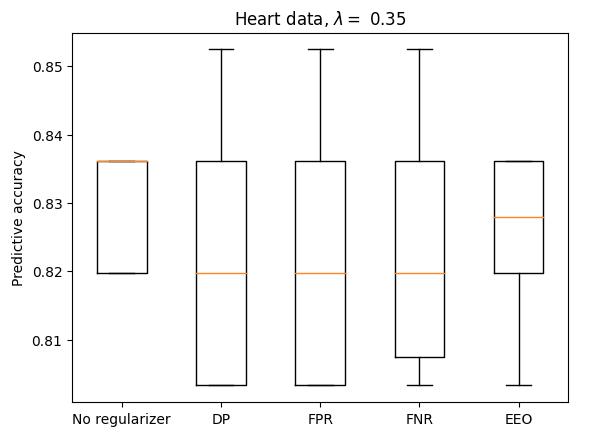}
    \label{fig:heart_acc_dist}
\end{minipage}
\hfill
\begin{minipage}{0.19\linewidth}
    \centering
    \includegraphics[width=\linewidth]{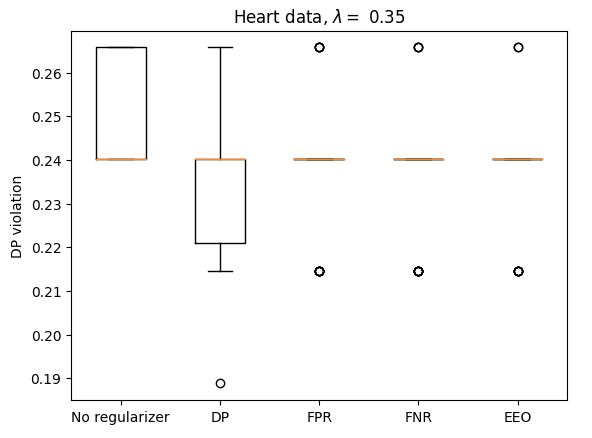}
    \label{fig:heart_DP_dist}
\end{minipage}
\hfill
\begin{minipage}{0.19\linewidth}
    \centering
    \includegraphics[width=\linewidth]{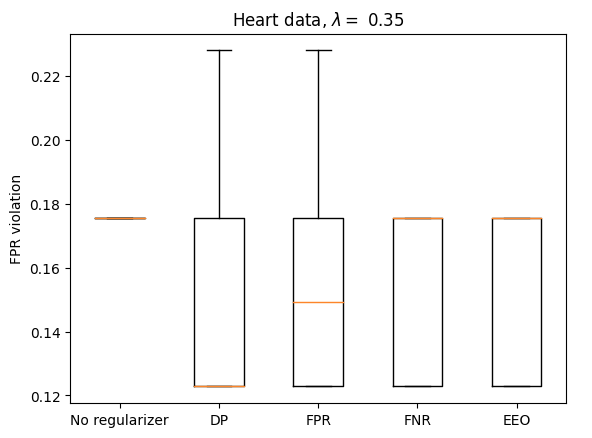}
    \label{fig:heart_FPR_dist}
\end{minipage}
\hfill
\begin{minipage}{0.19\linewidth}
    \centering
    \includegraphics[width=\linewidth]{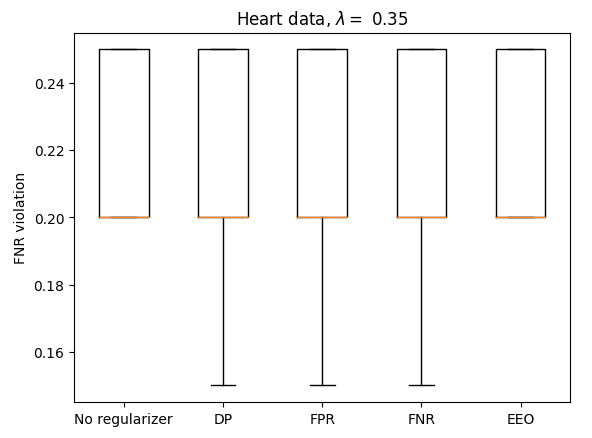}
    \label{fig:heart_FNR_dist}
\end{minipage}
\hfill
\begin{minipage}{0.19\linewidth}
    \centering
    \includegraphics[width=\linewidth]{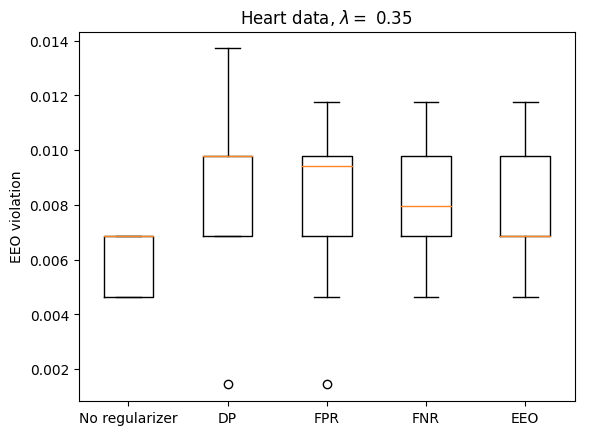}
    \label{fig:heart_EEO_dist}
\end{minipage}
\caption{Distributions of accuracy and fairness violations in heart disease data. In general, it seems the models are tending to make similar predictions, which often nearly equal medians.}
\label{fig:violations_heart_dists}
\end{figure}

\section{Discussion and Conclusion}
In this work, we extend the notion of property elicitation to consider regularized loss functions, and give a necessary and sufficient condition on a regularizer to be equivalent to the original property.
We apply this condition to demonstrate the (non-)equivalence of properties with a handful of regularizers common in the fair machine learning literature.
Finally, we show how the choice and weight of regularization function can change decision-making on synthetic data as well as the German lending and heart disease risk datasets.

\paragraph{Limitations and considerations}
The main intent of this work is to provide conceptual insight about how fairness regularizers change algorithmic decision-making and predictions.
The insights provided rely on the hypothesis class being sufficiently expressive, and should not be solely used to justify the use of a regularizer.
The addition of a regularizer and insights given are agnostic to the data itself and therefore agnostic to pre-processing and post-processing of data.
Additional pre- or post-processing of the data may change the elicited property, though we leave this to future work.

\paragraph{Future work}
There are many directions for future work.
This work serves as a proof of concept for the extension of property elicitation to accommodate regularization functions, demonstrated on a handful of regularizers, but applying the necessary and sufficient condition on the equivalence of properties under different regularizers and more general prediction tasks remains an open direction of work.
Moreover, it is important to understand how model complexity as well as pre- and post-processing of data can affect results. 
Finally, the addition of a regularizer seems to increase the complexity of the optimization problem linearly in $m$.
Understanding if there are more efficient ways to frame the optimization problem for certain regularization functions or data distributions also remains an open line of work.

\newpage
\subsection*{Acknowledgements}
This material is based upon work supported by the National Science Foundation under Award No. 2202898.
Thanks to Yiling Chen, Francisco Marmolejo Cossio, Esther Rolf, and Arpita Biswas for feedback and comments, as well as participants in the EC Gender Inclusion Workshop.
\bibliographystyle{abbrvnat}
\bibliography{refs}

\newpage
\appendix

\section{Additional example of non-equivalent properties}\label{app:omitted-regs}

\subsection{Equalized FNR}\label{subsec:fnr}
Similarly, we consider false negative rates.
Our objective is
\begin{align*}
    L^{FNR,\lambda}(\vec t; \vec s; \vec p) 
    &= \frac 1 m \sum_i L(t^\i, p^\i) + \lambda \left| \frac 1 {n_a} \sum_{i : s^\i = a, t^\i = 0} p^\i - \frac 1 {n_b} \sum_{i : s^\i = b, t^\i = 0} p^\i \right|
\end{align*}

Like the FPR regularizer, since  the FNR regularizer computes the difference of false negative rates between groups, one can observe that a way to reduce the false negative rate of a group is to assign more positive treatments $t^\i = 1$.
Again, we see in figure~\ref{fig:fnr_example} that the FNR regularizer then makes it worse for an algorithm to assign the negative treatment to an agent $i$ even if $p^\i$ slightly less than $1/2$. 

\begin{restatable}{corollary}{fnr}\label{cor:fnr}
Let $L : [0,1] \times \{0,1\} \to [0,1]$ and $\psi : r \mapsto \Ind{r \geq 1/2}$ indirectly elicit the mode over $\Y = \{0,1\}$ such that $L(y,y) = 0$, and let $\Theta^{FPR,\lambda} := \psi \circ \prop{L^{FPR,\lambda}}$.
Then $\Theta^{FPR, \lambda}$ is not equivalent to the mode for $\lambda > 0$.
\end{restatable}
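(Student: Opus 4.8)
Non-equivalence only requires a single witness: by Definition~\ref{def:property-equivalence} it suffices to produce one pair $(\vec s, \vec p)$ and one treatment $\vec t$ with $\vec t \in \Gamext(\vec p)$ but $\vec t \notin \Theta^{FPR,\lambda}(\vec s;\vec p)$. I would work with $m=2$ and $\vec s=(a,b)$, one agent per group, so the false-positive-rate term is as simple as possible, exactly the regime drawn in Figure~\ref{fig:fpr_example} and marked by the $\star$. The first step is a reduction. Since the loss $L^{FPR,\lambda}$ depends on the real-valued report vector $\vec r$ only through the binary pattern $\psi(\vec r)\in\{0,1\}^2$, minimizing over $\vec r$ factors through minimizing over binary patterns $\vec t$ of the finite objective $F(\vec t;\vec p)=\tfrac{1-\lambda}{2}C(\vec t;\vec p)+\lambda\,|FPR_a(\vec t;\vec s;\vec p)-FPR_b(\vec t;\vec s;\vec p)|$, where $C(\vec t;\vec p):=\sum_i \inf_{r:\psi(r)=t^\i}L(r;p^\i)$ is the least loss compatible with pattern $\vec t$. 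Consequently $\Theta^{FPR,\lambda}(\vec s;\vec p)=\argmin_{\vec t}F(\vec t;\vec p)$, and I only need a $\vec p$ at which the (unique) mode fails to minimize $F$.

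For the witness I would fix $p^{(2)}\in(3/4,1)$ (say $p^{(2)}=9/10$) and set $p^{(1)}=1/2+\epsilon$ for small $\epsilon>0$, so both probabilities strictly exceed $1/2$ and $\Gamext(\vec p)=\mathrm{mode}(\vec p)=\{(1,1)\}$. Adopting the standard convention that a group with no positively-treated agent contributes false-positive rate $0$, a direct computation gives $|FPR_a-FPR_b|=|p^{(2)}-p^{(1)}|$ for the pattern $(1,1)$ and $|FPR_a-FPR_b|=1-p^{(2)}$ for $(0,1)$. Hence flipping agent $1$ off the positive treatment strictly shrinks the disparity by $2p^{(2)}-\tfrac32-\epsilon>0$: group $a$'s only positive candidate is a near-coin-flip that inflates $FPR_a$, so the regularizer prefers to deny it, which is precisely the phenomenon the $\star$ in Figure~\ref{fig:fpr_example} depicts.

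I would then compare $F((0,1))$ with $F((1,1))$. The regularizer favors $(0,1)$ by $\lambda(2p^{(2)}-\tfrac32-\epsilon)$, while the only loss penalty incurred by $(0,1)$ is the cost of misclassifying the marginal agent, $\tfrac{1-\lambda}{2}\bigl[\inf_{r<1/2}L(r;p^{(1)})-\risk{L}(p^{(1)})\bigr]$. The crux --- and the step I expect to be the main obstacle --- is showing this extra loss vanishes as $\epsilon\to 0$: because $\psi$ thresholds at $1/2$ and $L$ indirectly elicits the mode, the optimal report for $p^{(1)}$ approaches the boundary $1/2$ as $p^{(1)}\to 1/2^+$, so forcing the report across $1/2$ costs $o(1)$. (One can verify this explicitly for squared and $0$-$1$ loss, where the gap is $(p^{(1)}-\tfrac12)^2$ and $2p^{(1)}-1$ respectively, with the normalization $L(y,y)=0$ fixing the scale.) For any fixed $\lambda>0$, choosing $\epsilon$ small enough that this vanishing loss penalty is dominated by the fixed positive regularizer gap yields $F((0,1))<F((1,1))$.

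Finally I would convert the strict inequality into the claim. Since $\inf_{\vec r}L^{FPR,\lambda}(\vec r;\vec s;\vec p)\le F((0,1))<F((1,1))=\inf_{\vec r:\psi(\vec r)=(1,1)}L^{FPR,\lambda}(\vec r;\vec s;\vec p)$, no minimizing report can carry the pattern $(1,1)$, so $(1,1)\notin\psi(\prop{L^{FPR,\lambda}}(\vec s;\vec p))=\Theta^{FPR,\lambda}(\vec s;\vec p)$, even though $(1,1)\in\Gamext(\vec p)$. By Definition~\ref{def:property-equivalence} this breaks the required equivalence at $\vec p$, giving $\Theta^{FPR,\lambda}\not\equivs\mathrm{mode}$ for every $\lambda>0$, paralleling the explicit argument for \eqref{eq:01-rep-dp} in \S\ref{subsec:dp}. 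The remaining routine work is the empty-group $FPR$ bookkeeping and making the vanishing-extra-loss bound quantitative enough to pin down an explicit $\epsilon(\lambda)$.
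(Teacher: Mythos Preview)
Your plan is sound and yields a correct proof. The paper does not supply a formal argument for this corollary; its justification is the picture in Figure~\ref{fig:fpr_example} together with the general machinery of Theorem~\ref{thm:equivalence-iff}, but that theorem is stated and proved for finite report spaces and so does not literally cover the indirect-elicitation setting here, where $L$ takes reports in $[0,1]$ and one post-composes with $\psi$. Your explicit two-agent witness is exactly the $\star$ point the paper marks informally, and your reduction via the pattern-indexed objective $F$ is the clean way to handle the composition with $\psi$ that the paper leaves implicit. So you are not taking a different route so much as carrying out the route the paper only gestures at.

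One tightening: the sentence ``the optimal report for $p^{(1)}$ approaches the boundary $1/2$'' is not what you actually need and is not always true (for $0$--$1$--type losses any $r\geq 1/2$ is optimal regardless of $p^{(1)}$). What you need is that $f_-(p):=\inf_{r<1/2}L(r;p)$ and $f_+(p):=\inf_{r\geq 1/2}L(r;p)$ satisfy $f_-(p)-f_+(p)\to 0$ as $p\downarrow 1/2$. This follows directly: each $f_\pm$ is a pointwise infimum of functions affine in $p$, hence concave on $[0,1]$ and continuous on $(0,1)$; indirect elicitation of the mode via $\psi$ forces $f_+<f_-$ on $(1/2,1)$ and $f_+>f_-$ on $(0,1/2)$, so by continuity $f_+(1/2)=f_-(1/2)$ and the gap vanishes. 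With this in hand your comparison $F((0,1))<F((1,1))$ goes through for every fixed $\lambda>0$ and all sufficiently small $\epsilon$, and the final inf-chain gives $(1,1)\notin\Theta^{FPR,\lambda}(\vec s;\vec p)$ as claimed. (Minor wording: it is the \emph{regularizer}, not all of $L^{FPR,\lambda}$, that depends on $\vec r$ only through $\psi(\vec r)$; your factorization into $F$ is nonetheless correct because the regularizer is constant on each pattern class.)
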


\begin{figure}
\includegraphics[width=\linewidth]{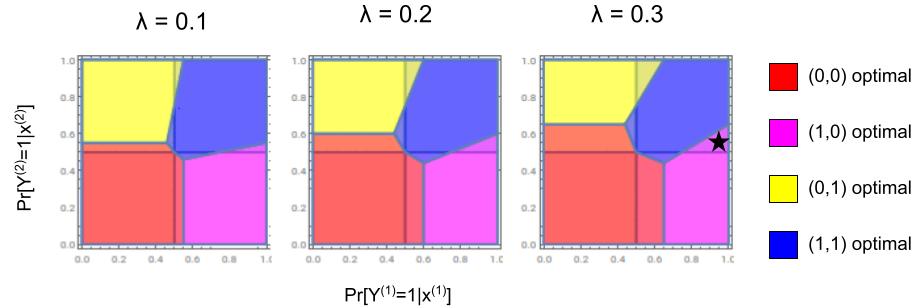}
\caption{Visualizing the level sets of the $FNR$-regularized property $\Theta^{FNR,\lambda}$ for different values of $\lambda \in [0,1]$, where $m = 2$ and $s = (a,b)$. Each point $(p^{(1)}, p^{(2)})$ in a square represents $(\Pr_{p^{(1)}}[Y=1], \Pr_{p^{(2)}}[Y=1])$, and each colored cell represents sets of $(p^{(1)}, p^{(2)})$ pairs such that the optimal treatment is the same for all points in the cell. For example, the magenta cell is the set of distributions where the decision-maker prefers to attribute the positive treatment ($t^\i = 1$) to the agent in group $a$, and the negative treatment ($t^\i = 0$) to the agent in group $b$.}
\label{fig:fnr_example}
\end{figure}

\end{document}